%% file: main.tex
\documentclass{article} % For LaTeX2e
\usepackage{iclr2027_conference,times}

\input{math_commands.tex}

\usepackage[utf8]{inputenc}
\usepackage[T1]{fontenc}
\usepackage{hyperref}
\usepackage{url}
\usepackage{booktabs}       % professional-quality tables
\usepackage{amsfonts}       % blackboard math symbols
\usepackage{amsmath}
\usepackage{amssymb}
\usepackage{amsthm}
\newtheorem{proposition}{Proposition}
\theoremstyle{remark}
\newtheorem{remark}{Remark}
\usepackage{nicefrac}       % compact symbols for 1/2, etc.
\usepackage{microtype}      % microtypography
\usepackage{subcaption}
\usepackage{multirow}
\usepackage{xcolor}
\usepackage{colortbl}
\usepackage{enumitem}
\usepackage{graphicx}
\usepackage{wrapfig}
\usepackage{needspace}
\usepackage[most]{tcolorbox}
\definecolor{pacgpromptblue}{HTML}{24639B}
\newtcolorbox[auto counter]{pacgpromptbox}[2][]{
    enhanced, breakable, width=\linewidth,
    colback=pacgpromptblue!6!white,
    colframe=pacgpromptblue!80!black,
    colbacktitle=pacgpromptblue!85!black, coltitle=white,
    fonttitle=\bfseries\small, fontupper=\small,
    title={Prompt~\thetcbcounter: #2},
    attach boxed title to top left={yshift=-0.1in,xshift=0.15in},
    boxed title style={boxrule=0pt,colframe=pacgpromptblue},
    top=8pt,bottom=6pt,left=7pt,right=7pt,#1
}
\usepackage{algorithm}
\usepackage{algpseudocode}
\usepackage{placeins}
\usepackage{flafter}        % keep floats after their first textual reference

\title{Visual Sensitivity Is Not Claim Retractability:\\Persistence-Aware Credit Assignment for Multimodal Reinforcement Learning}

\author{%
Zhongan Bi\textsuperscript{1}, Kepeng Lin\textsuperscript{2}, Xuanang Gao\textsuperscript{3},\\[3pt]
Yuhan Sun, Guoyi Li, Lianrun Zhang\\[8pt]
{\small\textsuperscript{1}Zhejiang University}\\
{\small\textsuperscript{2}Huazhong University of Science and Technology}\\
{\small\textsuperscript{3}Shanghai Jiao Tong University}
}
\hypersetup{pdfauthor={Zhongan Bi, Kepeng Lin, Xuanang Gao, Yuhan Sun, Guoyi Li, Lianrun Zhang}}

\iclrfinalcopy
\makeatletter
\renewcommand{\@maketitle}{%
  \vbox{\hsize\textwidth
    \centering
    {\LARGE\bfseries\@title\par}
    \vskip 14pt
    {\normalsize\normalfont\@author\par}
    \vskip 18pt
  }%
}
\makeatother

\newcommand{\pacg}{PACG}

\definecolor{oursblue}{RGB}{230,241,250}
\definecolor{oursbluestrong}{RGB}{210,231,247}
\definecolor{sectiongray}{RGB}{241,243,245}

\begin{document}

\maketitle
\addtocontents{toc}{\protect\setcounter{tocdepth}{-1}}

\input{section/0_abstract}

\begin{figure}[!t]
    \centering
    \includegraphics[width=\linewidth,height=0.43\textheight,keepaspectratio]{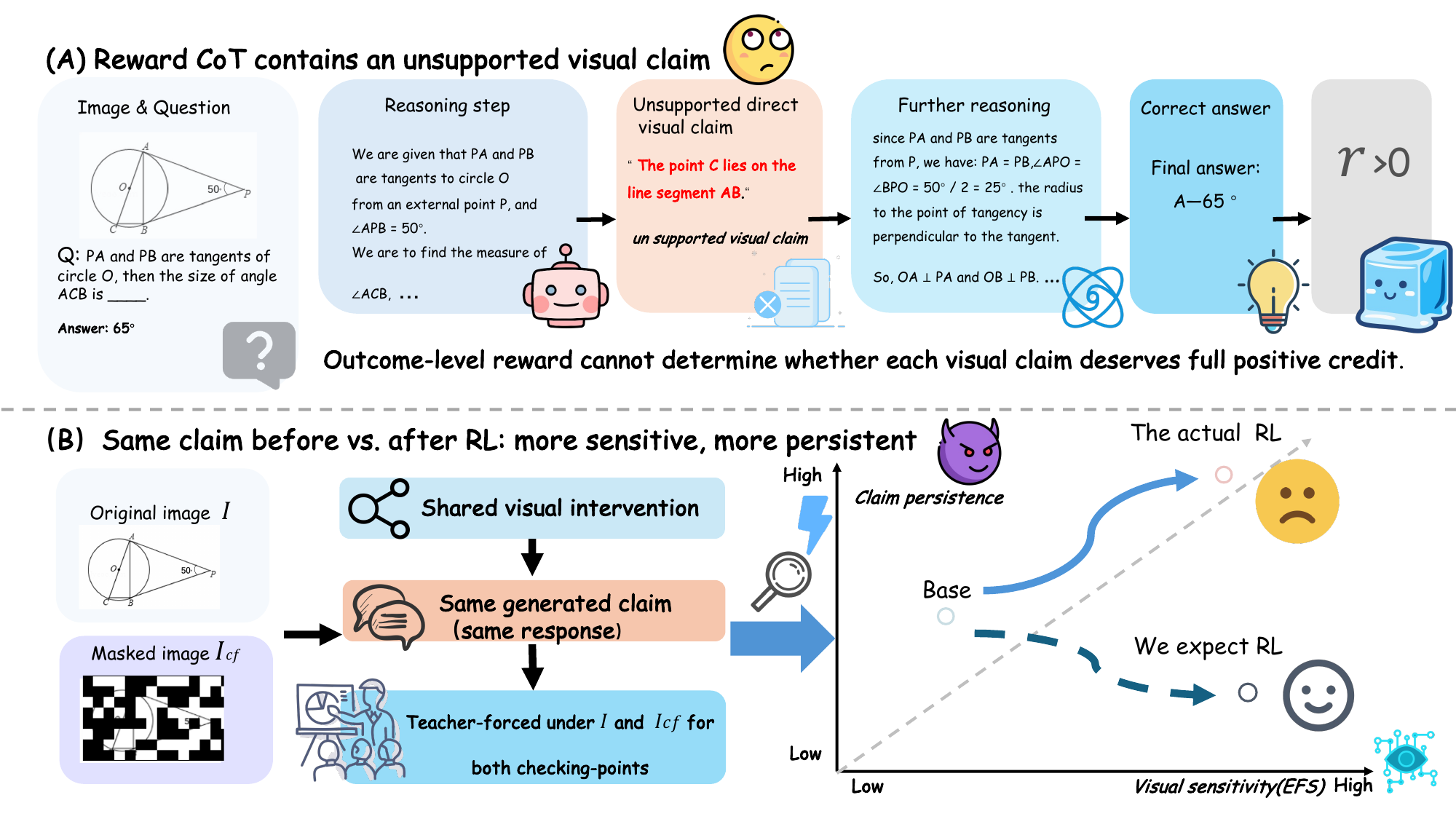}
   \caption{A correct pre-RL answer can conceal an unsupported visual assertion.
We fix this rewarded trace and compare its teacher-forced behavior across
checkpoints, asking whether outcome RL makes the claim more sensitive to image
intervention without making it more retractable.}
    \label{fig:motivating}
\end{figure}

\input{section/1_introduction}
\input{section/2_related_work}
% Queue the method overview early so it can appear at the top of page 4.
\input{figure/pacg_overview}
\input{section/3_preliminaries}
\input{section/4_method}
\input{section/5_experiments}
\input{section/8_conclusion}
\clearpage

% ----------------------------------------------------------------------------
% Bibliography
% ----------------------------------------------------------------------------
\bibliographystyle{iclr2027_conference}
\bibliography{refs}

% ----------------------------------------------------------------------------
% Supplementary material
% ----------------------------------------------------------------------------
\clearpage
\appendix
\addtocontents{toc}{\protect\setcounter{tocdepth}{2}}
\input{section/7_appendix}

\end{document}

%% file: math_commands.tex
\usepackage{amsmath,amsfonts,bm}

\def\eqref#1{equation~\ref{#1}}
\def\1{\bm{1}}

\DeclareMathAlphabet{\mathsfit}{\encodingdefault}{\sfdefault}{m}{sl}
\SetMathAlphabet{\mathsfit}{bold}{\encodingdefault}{\sfdefault}{bx}{n}

%% file: section/0_abstract.tex
\begin{abstract}
Reinforcement Learning with Verifiable Rewards (RLVR) has been extended to
Large Vision--Language Models (LVLMs), and perception-aware methods further
encourage policies to rely on visual evidence. Yet relying on the image does not
guarantee that visual claims are supported by it. Before RL training, 27.81\% of
the correctly answered responses of \mbox{Qwen2.5-VL-7B} on four multimodal
reasoning benchmarks contain at least one direct visual claim that the image
does not support. Since outcome-level RL rewards each response as a whole, these
claims inherit the positive credit of the correct answer. We introduce a
fixed-rollout counterfactual diagnostic that re-scores the same response under
an intervened image to separate Evidence-Function Sensitivity (EFS), how
strongly the model's predictions change, from claim persistence, whether the
model keeps supporting the same claim rather than retracting it. The diagnostic
reveals Sensitivity--Persistence Decoupling (SPD): under DAPO and VPPO, EFS
increases and claims become more retractable overall, yet unsupported claims
become significantly more persistent, whereas GRPO raises EFS without this
deterioration. We therefore propose Persistence-Aware Credit Gating (PACG),
which attenuates positive credit for unusually persistent visual claims and
leaves all other credit unchanged. It requires no supported/unsupported labels
and adds no inference cost. On \mbox{Qwen2.5-VL-7B}, PACG raises the
nine-benchmark average over three seeds from 58.1\% to 59.9\% with DAPO and
from 59.8\% to 60.9\% with VPPO, while making unsupported claims more
retractable. The gains extend to a larger model, a newer backbone, and the accuracy of HallusionBench also improves consistently.
These results suggest that visual sensitivity and claim
retractability are complementary dimensions of multimodal credit assignment.
\end{abstract}

%% file: section/1_introduction.tex
\section{Introduction}
\label{sec:introduction}

Reinforcement Learning with Verifiable Rewards (RLVR) has driven substantial
progress in language model post-training
\citep{shao2024deepseekmath,guo2025deepseek,yu2026dapo,jaech2024openai}
and has been extended to Large Vision--Language Models (LVLMs) for visual
mathematics, chart understanding, and geometric reasoning
\citep{huang2025visionr1,liu2025visual,meng2025mm,zhang2025r1}.
Yet stronger reasoning performance does not imply reliable visual
grounding. LVLMs can retain systematic visual blind spots
\citep{tong2024eyeswideshut}, extended reasoning can amplify visual
hallucination \citep{liu2025morethinking}, and RL gains may persist even under
hallucination-inductive visual conditions \citep{zhang2026hallucinationrl}.
A single trajectory-level reward cannot identify which parts of 
a multimodal reasoning trace deserve reinforcement. 
To address this credit assignment problem, recent work 
redirects supervision toward visual evidence via perception rewards, 
implicit objectives, token-level reweighting, and visually grounded selection \citep{xiao2026perception,wang2025papo,huang2026spotlight,li2026prpo,lu2026bridging,huang2026sketchvl,jiao2026credit}. 
Multimodal process reward models further provide finer-grained supervision
\citep{wang2026visualprm400k,zhou2026whether}. These approaches mainly
determine where and how strongly a policy relies on the image, 
leaving open whether individual visual claims are actually supported by the visual evidence.

We study this question at the level of direct visual claims, the minimal
response spans that assert information read directly from the image, such as
counts, OCR values, or geometric properties. As shown in
Figure~\ref{fig:motivating}(A), such a claim can appear inside a trajectory
that still reaches the correct answer. We call a claim retractable if its
support decreases when the visual evidence is disrupted, and persistent if it
retains that support. The distinction matters because outcome-level RL rewards
trajectories as a whole. We evaluate Qwen2.5-VL-7B on four multimodal reasoning benchmarks and collect the trajectories.
We find that 2,370 of 8,521 answer-correct Base trajectories
(27.81\%) contain at least one unsupported direct visual claim, and
each such claim inherits positive trajectory-level credit despite lacking
visual support. One might expect that making a policy more visually sensitive would also make
these claims easier to retract, but we find that it does not:
\begin{tcolorbox}[
colback=gray!10,
colframe=black,
boxrule=0.8pt,
arc=1.5pt,
top=2pt,
bottom=2pt,
left=3pt,
right=3pt
]
\emph{Greater visual sensitivity does not imply that unsupported visual
claims become more retractable.}
\end{tcolorbox}

To address this, we introduce a fixed-rollout counterfactual diagnostic.
For the same generated claim, we teacher-force the fixed response under the
original image $I$ and an intervened image $I^{\mathrm{cf}}$.
\textbf{Evidence-Function Sensitivity (EFS)} measures how strongly the
predictive distribution at the claim tokens changes under the intervention,
whereas \textbf{claim persistence} measures whether support for the same claim
is retained after its visual evidence is disrupted. The latter is calibrated
against residual tokens from the same rollout to remove trajectory-wide
likelihood drift. Tracking both quantities across checkpoints reveals
\textbf{Sensitivity--Persistence Decoupling (SPD)}. Under DAPO and VPPO, EFS
increases and the overall retractability of visual claims improves, yet
unsupported claims become more persistent. GRPO increases EFS without the same deterioration,
which shows that the relation between sensitivity and retractability depends on the optimizer rather than being universal.

Building on this diagnosis, we propose
\textbf{Persistence-Aware Credit Gating (\pacg{})}, which uses claim
persistence to gate positive token credit. Routed direct visual claims that
retract by a sufficient margin keep their full positive advantage, whereas
unusually persistent claims receive attenuated positive credit. Negative
advantages and non-routed tokens are left unchanged. \pacg{} uses no
supported/unsupported labels during training and adds no inference-time cost.
Across nine multimodal reasoning benchmarks, \pacg{} improves DAPO on all
nine benchmarks and further improves VPPO, achieving the highest aggregate
among our matched 7B runs, with both gains significant across three training
seeds. With both parent optimizers, it significantly reduces
unsupported-claim persistence while preserving or improving EFS and
final-answer accuracy, including on HallusionBench.

Our contributions are threefold:
\begin{itemize}[leftmargin=*, nosep]
\item We identify a credit-assignment failure in multimodal RL: unsupported
direct visual claims frequently occur inside answer-correct trajectories and
can inherit positive trajectory-level credit.

\item We introduce a fixed-rollout counterfactual diagnostic that separates
visual sensitivity from claim persistence and uncover
Sensitivity--Persistence Decoupling: multimodal RL can
increase visual sensitivity and aggregate retractability while making
unsupported visual claims more persistent.

\item We propose \pacg{}, a conservative and composable credit-gating
mechanism that improves unsupported-claim retractability across DAPO and
VPPO while preserving or improving visual sensitivity, benchmark accuracy,
and HallusionBench performance.
\end{itemize}

%% file: section/2_related_work.tex
\section{Related Work}
\label{sec:related_work}

\paragraph{Multimodal Reinforcement Learning with Verifiable Rewards.}
RLVR improves language-model reasoning by optimizing policies against
verifiable outcome rewards, with GRPO and its large-scale variants estimating
advantages from group-relative rewards
\citep{shao2024deepseekmath,guo2025deepseek,yu2026dapo}.
Recent work extends RLVR to vision--language models through task-specific
rewards and multimodal cold starts \citep{liu2025visual,huang2025visionr1},
multi-stage perception--reasoning training
\citep{meng2025mm,peng2025lmmr1,feng2026rewardmap},
and iterative SFT--RL optimization \citep{zhang2025r1,deng2025openvlthinker}.
Most of these methods still broadcast a single answer-level advantage across
all response tokens.
We show that this lets unsupported visual claims inside correct trajectories
inherit positive credit, and that the resulting persistence shift depends on
the optimizer: DAPO and VPPO worsen it, whereas GRPO does not have this effect.

\paragraph{Perception-Aware and Fine-Grained Policy Optimization.}
To make credit more visually grounded, prior work introduces explicit
perception rewards \citep{xiao2026perception}, implicit perception objectives
from visually suppressed inputs \citep{wang2025papo}, token-level reweighting
or selection of visually dependent tokens
\citep{huang2026spotlight,jiao2026credit}, and counterfactual or role-aware
objectives against language priors \citep{jiang2026srpo,yu2026cfpo}.
Multimodal process reward models further supervise individual steps
\citep{wang2026visualprm400k,zhang2026gm}, and grounding-oriented methods
tie reasoning to localized evidence regions
\citep{cao2025ground,huang2026evidence}.
These methods control where and how strongly the policy relies on the image.
PACG addresses a complementary question, namely whether a specific visual
claim loses support when its evidence is disrupted, and it can be composed
directly with perception-aware methods such as VPPO.
\paragraph{Multimodal Hallucination and Counterfactual Faithfulness.}
Visual hallucination is typically evaluated with object, attribute, and
relation benchmarks \citep{li2023pope,wang2023amber,guan2024hallusionbench}
and mitigated through preference optimization or fine-grained feedback
\citep{yu2024rlhfv,zhou2024povid,sarkar2025halva,yu2025rlaif}, or at inference
time through contrastive decoding \citep{leng2024vcd,huang2024opera}.
Counterfactual interventions have also been used to analyze modality-specific
causal effects, hallucination, and explanation faithfulness
\citep{li2025treble,ding2025edct,zafar2026medicalgrounding}.
These approaches either evaluate a fixed model or modify decoding and
preference data. In contrast, we use counterfactual image intervention inside RL training, turning
the retractability of each reasoning claim into a token-level credit signal, as this training-only mechanism adds no inference-time computational or latency cost.

%% file: figure/pacg_overview.tex
\begin{figure}[!t]
    \centering
    \includegraphics[width=\linewidth]{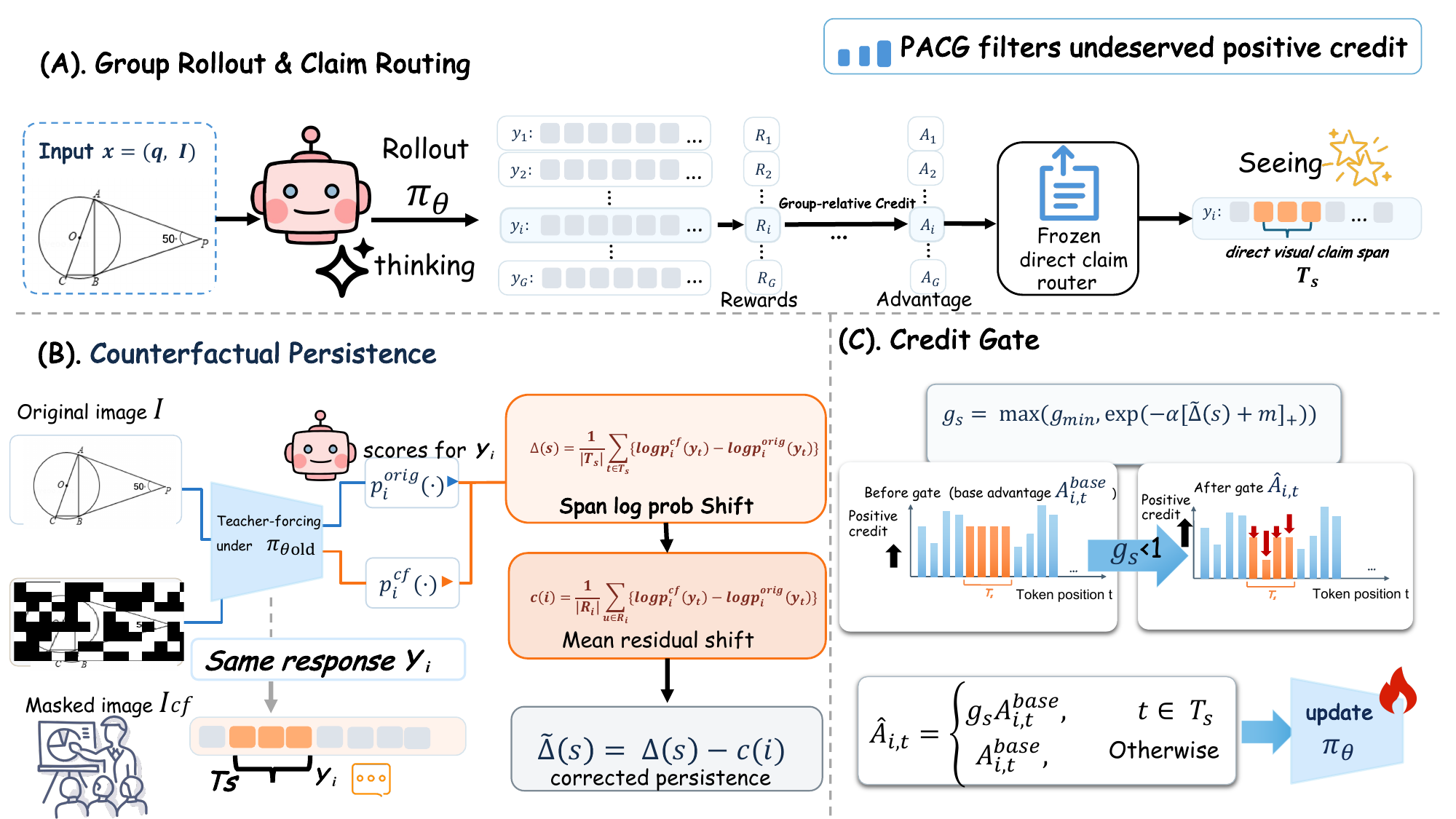}
    \caption{\textbf{Overview of PACG.}
    A frozen router identifies direct visual claims in responses scored
    under the original and degraded images. The residual-corrected signed
    shift $\widetilde{\Delta}(s)$ measures claim persistence and determines
    a soft gate on positive token credit. Negative advantages and
    non-routed tokens are unchanged, so the method only modulates positive credit for routed visual claims.}
    \label{fig:pacg_pipeline}
\end{figure}

%% file: section/3_preliminaries.tex
\section{Preliminaries}
\label{sec:preliminaries}

\paragraph{Group-relative RLVR.}
Given a multimodal prompt $x=(q,I)$, the rollout policy
$\pi_{\theta_{\mathrm{old}}}$ samples a group of responses
$\{y_i\}_{i=1}^{G}$, and each response receives a verifiable reward
$r_i\in\{0,1\}$ for final-answer correctness. GRPO and DAPO turn these
rewards into group-relative advantages
\citep{shao2024deepseekmath,yu2026dapo} and optimize a clipped objective,
written here in DAPO's token-mean form:
\begin{equation}
\mathcal{J}(\theta)=\mathbb{E}\!\left[
\frac{1}{\sum_i |y_i|}\sum_{i=1}^{G}\sum_{t=1}^{|y_i|}
\min\!\left(\rho_{i,t}A_{i,t},\,
\operatorname{clip}(\rho_{i,t},1-\epsilon_{\mathrm{low}},
1+\epsilon_{\mathrm{high}})A_{i,t}\right)\right],
\label{eq:clipped_objective}
\end{equation}
where $\rho_{i,t}=\pi_\theta(y_{i,t}\mid x,y_{i,<t})/
\pi_{\theta_{\mathrm{old}}}(y_{i,t}\mid x,y_{i,<t})$ denotes the
importance ratio and $A_{i,t}$ is the credit assigned to token $t$.
GRPO differs only in averaging the loss per sequence and in using
symmetric clipping, and we omit parent-specific regularizers.

\paragraph{Outcome-level credit ambiguity.}
What matters for this work is how the token credit $A_{i,t}$ is obtained.
Outcome-driven optimizers such as GRPO and DAPO simply broadcast the sequence
advantage to every token, assigning each token the same sequence-level advantage:
\begin{equation}
    A^{\mathrm{base}}_{i,t}=A_i,
    \label{eq:credit_broadcast}
\end{equation}
while perception-aware optimizers such as VPPO reshape this credit with
image-perturbation dependency scores \citep{huang2026spotlight}. In both
cases, the credit a visual claim receives depends on whether the final answer
is correct and on how strongly its tokens depend on the image, but not on
whether the image actually supports the claim. An unsupported claim inside a
correct trajectory therefore inherits positive credit, and the objective
provides no signal to distinguish it from a supported one. PACG is designed to
supply such a signal without requiring support labels.

%% file: section/4_method.tex
\section{Method}
\label{sec:method}

As illustrated in
Figure~\ref{fig:pacg_pipeline}, PACG proceeds in three steps. It first identifies the spans of a response
that make direct visual claims, then measures whether each claim retracts
when its visual evidence is disrupted, and finally scales down the positive
credit of claims that persist. Only the positive parent credit on routed claims
is modified during optimization, while all other credit remains unchanged.

\subsection{Direct Visual Claim Routing}
\label{sec:claim_routing}

We focus on direct visual claims, the minimal spans of a response that assert
information read directly from the image, such as an object attribute, a count,
an OCR value, a spatial relation, a chart reading, or a geometric property. As
shown in Figure~\ref{fig:pacg_pipeline}(A), this definition is functional
rather than factual. An incorrect assertion is still routed if it presents
itself as image evidence, whereas derived reasoning and paraphrases of the
question are excluded. A frozen \mbox{Qwen3.6-35B-A3B} router maps each claim
$s$ to its token set $\mathcal{T}_s$ in response $y_i$. After excluding routed
and otherwise visually dependent tokens, the remaining tokens form a residual
set $\mathcal{R}_i$ that serves as a within-response reference. Whether a claim
is supported by the image is annotated only for analysis and never used in
training.

\subsection{Fixed-Rollout Counterfactual Measurement}
\label{sec:cf_measurement}

To test whether a claim depends on its evidence, we degrade the image,
$I_i^{\mathrm{cf}}=\mathcal{C}(I_i)$, and teacher-force the same sampled
response under both images with a frozen policy $\pi_\phi$, as shown in
Figure~\ref{fig:pacg_pipeline}(B). Because every response prefix is fixed,
any change in the predictions of the model comes from the visual context
alone. We write $p^{\mathrm{orig}}_{i,t}(\cdot)=
\pi_\phi(\cdot\mid q_i,I_i,y_{i,<t})$ and
$p^{\mathrm{cf}}_{i,t}(\cdot)=
\pi_\phi(\cdot\mid q_i,I_i^{\mathrm{cf}},y_{i,<t})$, where
$\phi=\theta_{\mathrm{old}}$ during training and $\phi$ is the evaluated
checkpoint in offline analysis.

\paragraph{Sensitivity.}
A natural first measure is how much the predictive distribution at the claim
tokens changes. We define this change as Evidence-Function Sensitivity, or EFS for short.
\begin{equation}
    \operatorname{EFS}_\phi(s)
    =
    \frac{1}{|\mathcal{T}_s|}
    \sum_{t\in\mathcal{T}_s}
    D_{\mathrm{KL}}\!\left(
        p^{\mathrm{orig}}_{i,t}\,\|\,p^{\mathrm{cf}}_{i,t}
    \right).
    \label{eq:efs}
\end{equation}
EFS, however, is unsigned. A large divergence can arise when the claim
becomes less likely, but also when it becomes more likely, for instance when
the masked image leads the model to fall back on a language prior that favors
the same assertion. EFS therefore reveals how strongly a claim reacts to the
image but not whether the model withdraws it, and we use it only for
diagnosis.

\paragraph{Persistence.}
We track the claim's token log-likelihood to capture this directional change:
\begin{equation}
    \Delta_\phi(s)
    =
    \frac{1}{|\mathcal{T}_s|}
    \sum_{t\in\mathcal{T}_s}
    \left[
        \log p^{\mathrm{cf}}_{i,t}(y_{i,t})
        -\log p^{\mathrm{orig}}_{i,t}(y_{i,t})
    \right].
    \label{eq:raw_claim_shift}
\end{equation}
Image corruption also shifts the likelihood of the entire response, so a
negative $\Delta_\phi(s)$ may reflect trajectory-wide drift rather than a
claim-specific retraction. We therefore correct for this drift by subtracting the mean shift of the residual tokens in the same response:
\begin{equation}
    c_\phi(i)=
    \frac{1}{|\mathcal{R}_i|}
    \sum_{u\in\mathcal{R}_i}
    \left[
        \log p^{\mathrm{cf}}_{i,u}(y_{i,u})
        -\log p^{\mathrm{orig}}_{i,u}(y_{i,u})
    \right],
    \qquad
    \widetilde{\Delta}_\phi(s)=\Delta_\phi(s)-c_\phi(i).
    \label{eq:corrected}
\end{equation}
A negative $\widetilde{\Delta}_\phi(s)$ means that the claim loses more
support than the rest of the response, which we call retraction, whereas
values near or above zero indicate persistence. The residual set is a
practical reference, and the
intervention tests how a claim responds to its evidence. We refer to
the mismatch between EFS and corrected persistence as Sensitivity--Persistence Decoupling (SPD).

\subsection{Persistence-Aware Credit Gating}
\label{sec:pacg}

As shown in Figure~\ref{fig:pacg_pipeline}(C), the corrected persistence of 
each routed claim soft-gates:

\begin{equation}
    v_s=[\widetilde{\Delta}_{\theta_{\mathrm{old}}}(s)+m]_+,
    \qquad
    g_s=\max\!\left(g_{\min},e^{-\alpha v_s}\right),
    \label{eq:span_gate}
\end{equation}
where $m\geq0$ is the retraction margin, $\alpha\geq0$ controls the strength
of attenuation, and $g_{\min}\in(0,1]$ is the credit floor. Claims that
retract by at least $m$ keep their full positive credit, while more
persistent claims are attenuated smoothly toward $g_{\min}$, so even the most
persistent claim retains part of its credit. The gate is computed with the
frozen rollout policy and treated as a stop-gradient, which prevents the
policy from raising its credit by differentiating through the gate.

\subsection{PACG Policy Optimization}
\label{sec:pacg_optimization}

Finally, the gates are applied to the token credit of the parent. Let
$g^{\mathrm{span}}_{i,t}$ be the minimum gate among routed claims covering
token $t$, with value one for non-routed tokens. PACG replaces the original unscaled parent credit with a soft-gated credit computed as follows:
\begin{equation}
    \widetilde{A}_{i,t}
    =
    \begin{cases}
        g^{\mathrm{span}}_{i,t}A^{\mathrm{base}}_{i,t},
        & A^{\mathrm{base}}_{i,t}>0 \text{ and } t \text{ is routed},\\
        A^{\mathrm{base}}_{i,t},
        & \text{otherwise},
    \end{cases}
    \label{eq:pacg_advantage}
\end{equation}
and substitutes $\widetilde{A}_{i,t}$ for $A_{i,t}$ in
Equation~\ref{eq:clipped_objective}, leaving the reward, rollout generation,
normalization, and clipping unchanged. For GRPO and DAPO,
$A^{\mathrm{base}}_{i,t}$ is the broadcast credit of
Equation~\ref{eq:credit_broadcast}, and for VPPO it is the perception-shaped
token credit.

\paragraph{Why positive-only gating?}
Negative advantages already suppress tokens from failed trajectories,
including their persistent claims. Attenuating them would weaken this
corrective signal without establishing that any individual claim is wrong, so
PACG gates only positive credit. Because
$g^{\mathrm{span}}_{i,t}\in[g_{\min},1]$, PACG preserves the sign of every
credit, never increases its magnitude, and reduces to the parent optimizer
when no claim is routed or when all routed claims retract.

\paragraph{Can the policy evade the gate?}
A policy might try to avoid attenuation by producing fewer visual claims.
A free-generation audit shows only a modest drop in the rate of direct claims
together with longer responses, rather than a collapse of visual content.
Routing and fallback rules, this audit, the formal properties of the gate,
and the complete training procedure are provided in the appendix.

%% file: section/5_experiments.tex
\section{Experiments}
\label{sec:experiments}

\subsection{Experimental Setup}
\label{sec:experimental_setup}
\paragraph{Models and training.}
Following VPPO, we train Qwen2.5-VL-7B on ViRL39K~\citep{wang2025vl},
comparing Base, GRPO, DAPO, VPPO, and PACG compositions.
We also evaluate Qwen2.5-VL-32B and Qwen3-VL-2B-Thinking.
Public baselines include ThinkLite-VL, VL-Rethinker, NoisyRollout,
R1-ShareVL, and MM-Eureka
\citep{wang2025sota,wang2025vl,liu2025noisyrollout,yao2025r1,meng2025mm}.
The 7B and 32B runs use two epochs, learning rate $10^{-6}$, rollout batch
size 384, and response limits of 2,048 and 4,096 tokens, while the 2B runs
use 200 steps and 8,192 tokens. All runs use an entropy penalty of 0.1, and
PACG uses $m=0.05$, $\alpha=10$, $g_{\min}=0.30$, and random $14\times14$
patch blackening with probability 0.5. 
% Matched Qwen2.5-VL-7B runs are trained
% with three random seeds and we report their mean.
% TODO: 按实际情况写明 32B 和 2B 是否为单 seed，例如：
% The 32B and 2B runs use a single seed.
Further training details are provided in Appendix~\ref{app:training_details}.

\vspace{-8pt}
\paragraph{Evaluation.}
Task-level accuracy is evaluated on MMK12~\citep{meng2025mm},
MathVerse~\citep{zhang2024mathverse}, DynaMath~\citep{zou2024dynamath},
MathVision~\citep{wang2024measuring}, Geometry3K~\citep{lu2021inter},
We-Math~\citep{qiao2024we}, LogicVista~\citep{xiao2024logicvista},
MMMU-Pro~\citep{yue2024mmmu}, and Clever-Count~\citep{johnson2017clevr}.
We follow the evaluation protocol of VPPO~\citep{huang2026spotlight} and
report mean accuracy over eight responses at temperature 1.0, with Avg.\
denoting the unweighted mean over the nine benchmarks. PACG adds no inference
cost and increases the 7B training time per step by 57.3\%, as detailed in Appendix~\ref{app:coverage_efficiency}.
\vspace{-8pt}
\subsection{Main Results}
\label{sec:main_results}
\begin{table*}[t]
    \caption{\textbf{Main results across multimodal reasoning benchmarks
    and backbone families.} All entries are accuracy (\%). The first six
    datasets assess general mathematical and geometric reasoning, whereas
    the last three emphasize vision-dependent multimodal reasoning.
    Public 7B baselines are listed first, followed by our matched
    Qwen2.5-VL-7B runs. \textbf{Bold} and \underline{underlined} values
    denote the best and second-best completed results within each backbone
    block. Additional blocks show transfer to a larger model scale and
    a newer backbone family.}
    \label{tab:main_results}
    \vspace{-5pt}
    \centering
    \fontsize{7.5}{9.0}\selectfont
    \setlength{\tabcolsep}{1.9pt}
    \renewcommand{\arraystretch}{1.30}
    \begin{tabular}{@{}ll*{10}{c}@{}}
        \toprule
        \multirow{2}{*}{\textbf{Backbone}}
        & \multirow{2}{*}{\textbf{Method}}
        & \multicolumn{6}{c}{\textbf{General Mathematical \& Geometric Reasoning}}
        & \multicolumn{3}{c}{\textbf{Vision-Dependent Reasoning}}
        & \multirow{2}{*}{\textbf{Avg.}} \\
        \cmidrule(lr){3-8}\cmidrule(lr){9-11}
        & & \textbf{MMK12} & \textbf{\shortstack{Math\\Verse}}
        & \textbf{\shortstack{Dyna\\Math}} & \textbf{\shortstack{Math\\Vision}}
        & \textbf{\shortstack{Geometry\\3K}} & \textbf{\shortstack{We-\\Math}}
        & \textbf{\shortstack{Logic\\Vista}} & \textbf{\shortstack{MMMU-\\Pro}}
        & \textbf{\shortstack{Clever-\\Count}} & \\
        \midrule
        \multirow{11}{*}{\shortstack[l]{Qwen2.5-VL\\7B}}
        & ThinkLite-VL
        & 62.5 & 63.8 & 62.0 & \underline{31.5} & 35.8 & 66.4 & 40.0 & 27.6 & 77.8 & 51.9 \\
        & VL-Rethinker
        & 69.3 & 68.8 & 65.7 & 29.5 & 40.7 & 68.5 & 45.8 & 39.7 & 82.0 & 56.7 \\
        & NoisyRollout$^{*}$
        & 50.0 & 67.8 & 62.1 & 22.1 & \textcolor{gray}{46.9} & \underline{71.0} & 45.3 & 34.5 & 85.7 & 53.9 \\
        & R1-ShareVL
        & 70.9 & 68.2 & 63.9 & 27.5 & 41.2 & 69.9 & 45.4 & 35.1 & 81.2 & 55.9 \\
        & MM-Eureka
        & 67.5 & 65.4 & 64.8 & 27.1 & 40.8 & 65.5 & 45.8 & 35.3 & 77.8 & 54.4 \\
        \cmidrule(lr){2-12}
        & Base
        & 42.8 & 40.2 & 55.0 & 18.8 & 35.5 & 50.2 & 40.6 & 30.0 & 61.6 & 41.6 \\
        & GRPO
        & 72.2 & 67.8 & 65.1 & 30.1 & 42.0 & 67.8 & 47.1 & 38.3 & 81.0 & 56.8 \\
        & DAPO
        & 81.0 & 66.6 & 64.6 & 30.3 & 42.4 & 67.9 & 46.2 & 39.2 & 84.6 & 58.1 \\
        & VPPO
        & 80.4 & \underline{70.1} & \underline{66.5} & 31.0 & \underline{44.7}
        & 70.1 & \underline{47.9} & 40.0 & 87.4 & 59.8 \\
        \rowcolor{oursblue}
        & \textbf{DAPO+\pacg{} (Ours)}
        & \underline{81.2} & \textbf{70.6} & 65.9 & 30.6 & 44.3
        & 70.1 & 47.1 & \textbf{40.7} & \underline{88.7}
        & \underline{59.9} \\
        \rowcolor{oursbluestrong}
        & \textbf{VPPO+\pacg{} (Ours)}
        & \textbf{81.5} & \underline{70.1} & \textbf{67.0}
        & \textbf{32.1} & \textbf{45.7} & \textbf{71.5}
        & \textbf{50.4} & \underline{40.6} & \textbf{89.5}
        & \textbf{60.9} \\
        \midrule[1.0pt]
        \rowcolor{sectiongray}
        \multicolumn{12}{c}{\textit{Transfer to a Larger Model Scale}} \\
        \multirow{4}{*}{\shortstack[l]{Qwen2.5-VL\\32B}}
        & Base
        & 56.9 & 67.2 & 69.9 & 36.9 & 50.0 & 68.2 & 55.6 & 45.8 & 78.4 & 58.8 \\
        & GRPO
        & 79.5 & 74.1 & 65.0 & 41.0 & 53.4 & \textbf{73.3} & 58.6 & \textbf{48.6} & 83.5 & 64.1 \\
        & DAPO
        & \underline{86.5} & \underline{75.2} & \textbf{74.3}
        & \underline{41.8} & \underline{53.8} & 72.1 & \underline{59.1}
        & \underline{47.9} & \underline{86.2} & \underline{66.3} \\
        \rowcolor{oursblue}
        & \textbf{DAPO+\pacg{} (Ours)}
        & \textbf{87.1} & \textbf{75.9} & \underline{73.8}
        & \textbf{42.7} & \textbf{54.1} & \underline{72.8} & \textbf{61.1}
        & \underline{47.9} & \textbf{89.0} & \textbf{67.2} \\
        \midrule[1.0pt]
        \rowcolor{sectiongray}
        \multicolumn{12}{c}{\textit{Transfer to a Newer Backbone Family}} \\
        \multirow{6}{*}{\shortstack[l]{Qwen3-VL\\2B\\Thinking}}
        & Base
        & 31.1 & 51.5 & 53.1 & 19.0 & 36.4 & 57.2 & 32.0 & 20.4 & 74.1 & 41.6 \\
        & GRPO
        & 51.3 & 53.5 & 62.0 & 24.4 & 45.4 & 57.1 & 33.2 & 23.3 & 83.2 & 48.1 \\
        & DAPO
        & 50.8 & 60.1 & 61.5 & 25.1 & 44.2 & 65.6 & 43.7 & 24.9 & 86.2 & 51.3 \\
        & VPPO
        & \textbf{51.7} & \underline{68.9} & \textbf{62.3} & \textbf{26.7}
        & \underline{47.2} & 67.6 & 44.5 & 25.7 & 88.6 & \underline{53.7} \\
        \rowcolor{oursblue}
        & \textbf{DAPO+\pacg{} (Ours)}
        & \underline{51.6} & 67.5 & 61.8 & \underline{25.8} & 46.9
        & \underline{67.9} & \underline{44.6} & \underline{26.0}
        & \underline{89.0} & 53.5 \\
        \rowcolor{oursbluestrong}
        & \textbf{VPPO+\pacg{} (Ours)}
        & \textbf{51.7} & \textbf{69.4} & \underline{62.1} & \textbf{26.7}
        & \textbf{48.9} & \textbf{68.2} & \textbf{45.0} & \textbf{26.1}
        & \textbf{92.8} & \textbf{54.5} \\
        \bottomrule
    \end{tabular}
    \par\smallskip
    \begin{minipage}{\linewidth}
    \fontsize{8}{9.5}\selectfont
   
    $^{*}$NoisyRollout is trained using the training set of Geometry3K;
    its Geometry3K score is excluded from that column's ranking.
    \end{minipage}
\end{table*}

\paragraph{Q1: Does persistence-aware gating improve task accuracy?}
Yes, for both parent optimizers. As shown in Table~\ref{tab:main_results},
DAPO+\pacg{} improves all nine benchmarks over DAPO and raises the average
from 58.1 to 59.9, while VPPO+\pacg{} improves eight benchmarks and reaches
the best 7B average of 60.9. The largest gains appear on Clever-Count and
MathVerse, the benchmarks that depend most directly on reading the image.
 Because PACG leaves the reward untouched, these gains come from credit
allocation alone. Both average gains are statistically significant across three training seeds
under a two-sided Welch's $t$-test ($p<0.05$), and the per-seed results are
reported in Appendix~\ref{app:seed_variance}.\looseness=-1\par

\paragraph{Q2: Is the effect specific to one model?}
No. As shown in Table~\ref{tab:main_results}, DAPO+\pacg{} raises the average from 66.3 to 67.2 on Qwen2.5-VL-32B and
from 51.3 to 53.5 on Qwen3-VL-2B-Thinking, where VPPO+\pacg{} again gives the
best aggregate (54.5).

\begin{wraptable}[10]{r}{0.40\linewidth}
\vspace{-18pt}
\centering
\captionsetup{font=footnotesize,skip=3pt}
\caption{\textbf{HallusionBench} (7B; \%; single checkpoint per model).
}
\label{tab:hallusion}
\fontsize{8.5}{10}\selectfont
\setlength{\tabcolsep}{3.5pt}
\begin{tabular}{@{}lccc@{}}
\toprule
Model & aAcc$\uparrow$ & fAcc$\uparrow$ & qAcc$\uparrow$\\
\midrule
Base & 65.19 & 38.15 & 33.41\\
GRPO & 66.90 & 44.35 & 35.10\\
DAPO & 67.17 & 43.77 & 35.32\\
VPPO & 67.70 & 45.06 & 35.18\\
\midrule
\rowcolor{oursblue}
DAPO+\pacg{} & 67.64 & 44.64 & 35.54\\
\rowcolor{oursbluestrong}
VPPO+\pacg{} & \textbf{68.11} & \textbf{45.51} & \textbf{36.04}\\
\bottomrule
\end{tabular}
\vspace{-10pt}
\end{wraptable}
\paragraph{Q3: Does the gain carry over to a hallucination benchmark?}
Yes, consistently though modestly. Table~\ref{tab:hallusion} shows that
\pacg{} improves all three HallusionBench~\citep{guan2024hallusionbench}
metrics for both parents, and VPPO+\pacg{} is best on every column. The
largest gain for VPPO is 0.86 points on qAcc, which requires the same
question to be answered correctly across its visual variants. fAcc, which
requires every question about an image to be correct, also rises for both
parents, so the gains are not confined to isolated questions.
\par
\WFclear
\Needspace{17\baselineskip}
\begin{wrapfigure}[18]{r}{0.47\linewidth}
    \vspace{-8pt}
    \centering
    \includegraphics[width=\linewidth]{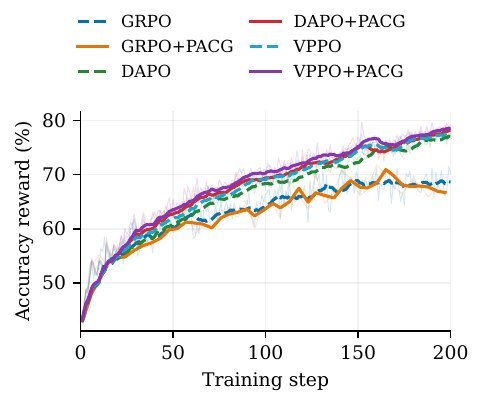}
    \captionsetup{skip=2pt}
    \caption{\textbf{Online accuracy reward.} Dashed: parents;
    solid: PACG compositions; curves are trailing 10-step means for one
    representative seed per method.}
    \label{fig:train_reward}
    \vspace{-10pt}
\end{wrapfigure}

\paragraph{Q4: Does gating positive credit slow down outcome learning?}
No. As shown in Figure~\ref{fig:train_reward}, PACG closely tracks the reward
trajectory of GRPO and raises the late-stage reward of DAPO and VPPO. Online
entropy, length, and gate statistics are reported in
Appendix~\ref{app:online_training_diagnostics}.

\subsection{Ablation Study}
\label{sec:gate_ablation}
\paragraph{Q5: Which components of the gate actually matter?}
All three. Table~\ref{tab:gate_ablation} shows that without direct-claim
routing, gating every positive token destroys the training signal and
accuracy collapses to 38.0. Removing residual calibration or the
positive-only restriction remains competitive on the mathematical benchmarks
but loses 7.4 and 6.1 points on Clever-Count and weakens unsupported-claim
retractability, confirming that both components are necessary.
\WFclear

\begin{table}[!htb]
    \centering
    \caption{\textbf{Component ablations with DAPO as parent.}
    \emph{w/o Residual Corr.}: omit same-rollout residual calibration;
    \emph{w/o Direct Routing}: gate all positive tokens instead of direct claims;
    \emph{w/o Positive-Only}: also attenuate negative advantages.
    Accuracy and its nine-benchmark mean (Avg.) are percentages.
    Unsup. Pers. is the unsupported corrected persistence defined in Section~\ref{sec:mechanism_results}.
    Entries are point estimates; bold marks the best column value.}
    \label{tab:gate_ablation}
    \label{tab:gate_ablation_acc}
    \vspace{-3pt}
    {\fontsize{8.0}{9.6}\selectfont
    \setlength{\tabcolsep}{3.5pt}
    \renewcommand{\arraystretch}{1.22}
    \resizebox{\linewidth}{!}{%
    \begin{tabular}{@{}l*{11}{c}@{}}
        \toprule
        \textbf{Variant}
        & \textbf{MMK12} & \shortstack{\textbf{Math}\\\textbf{Verse}}
        & \shortstack{\textbf{Dyna}\\\textbf{Math}}
        & \shortstack{\textbf{Math}\\\textbf{Vision}}
        & \shortstack{\textbf{Geometry}\\\textbf{3K}}
        & \shortstack{\textbf{We-}\\\textbf{Math}}
        & \shortstack{\textbf{Logic}\\\textbf{Vista}}
        & \shortstack{\textbf{MMMU-}\\\textbf{Pro}}
        & \shortstack{\textbf{Clever-}\\\textbf{Count}}
        & \textbf{Avg.} & \shortstack{\textbf{Unsup.}\\\textbf{Pers.} $\downarrow$} \\
        \midrule
        DAPO
        & 81.0 & 66.6 & 64.6 & 30.3 & 42.4 & 67.9
        & 46.2 & 39.2 & 84.6 & 58.1 & $-0.1241$ \\
        w/o Residual Corr.
        & 80.9 & 69.0 & 65.6 & 30.9 & \textbf{45.4} & 69.1
        & \textbf{48.9} & 40.1 & 81.3 & 59.0 & $-0.1278$ \\
        w/o Direct Routing
        & 46.3 & 40.8 & 36.7 & 16.2 & 10.4 & 44.0
        & 31.7 & 30.3 & 85.2 & 38.0 & $-0.0374$ \\
        w/o Positive-Only
        & 80.8 & 70.3 & \textbf{66.1} & \textbf{31.3}
        & 44.5 & \textbf{70.6} & 47.7 & 39.6 & 82.6
        & 59.3 & $-0.1301$ \\
        \rowcolor{oursblue}
        \textbf{PACG (full)}
        & \textbf{81.2} & \textbf{70.6} & 65.9 & 30.6 & 44.3
        & 70.1 & 47.1 & \textbf{40.7} & \textbf{88.7}
        & \textbf{59.9} & $\mathbf{-0.1392}$ \\
        \bottomrule
    \end{tabular}}}
\end{table}

\Needspace{11\baselineskip}
\begin{wraptable}[9]{r}{0.46\linewidth}
\vspace{-12pt}
\centering
\captionsetup{font=footnotesize,skip=4pt}
\caption{\textbf{Margin sweep} ($\alpha{=}10$, $g_{\min}{=}0.30$).
Acc.: MMK12; Ret.: positive-credit retention (both \%).}
\label{tab:margin}
\small
\setlength{\tabcolsep}{3.5pt}
\begin{tabular}{@{}ccccc@{}}
\toprule
$m$ & Acc. & EFS $\uparrow$ & \shortstack{Unsup.\\Pers. $\downarrow$} & Ret.\\
\midrule
0.00 & 81.0 & \textbf{0.34} & $-0.08$ & 95\\
\rowcolor{oursblue}
0.05 & \textbf{81.2} & 0.31 & $-0.14$ & 81\\
0.10 & 81.0 & 0.28 & $\mathbf{-0.20}$ & 65\\
\bottomrule
\end{tabular}
\vspace{-10pt}
\end{wraptable}
\paragraph{Margin selection.}
Table~\ref{tab:margin} compares three margins. The zero margin retains 95\%
of positive credit and matches the MMK12 accuracy of ungated DAPO. At
$m=0.10$, retention drops to 65\% and EFS to 0.28, and accuracy falls
slightly below the result at $m=0.05$ despite stronger retraction. We
therefore use $m=0.05$ as the operating point.
\WFclear

\subsection{Sensitivity--Persistence Decoupling and Its Mitigation}
\label{sec:mechanism_results}
\paragraph{Protocol.}
CrossBench4-2000 contains 2,000 questions from Geometry3K, MMK12, LogicVista,
and MathVerse, each with four fixed Base responses, and the questions are
selected without regard to correctness or span labels. Because teacher-forced
scoring of every checkpoint is costly, mechanism results use the model from
the first of the three training seeds. Each model scores the same responses
and spans under the original image and a hash-fixed masked image. We report
direct EFS from Equation~\ref{eq:efs}, where higher values mean greater visual
sensitivity, and corrected persistence from Equation~\ref{eq:corrected}, where
lower values mean stronger retraction. More details are in Appendix~\ref{app:crossbench_reference}.

\begin{figure}[!htbp]
\centering
\includegraphics[width=\linewidth]{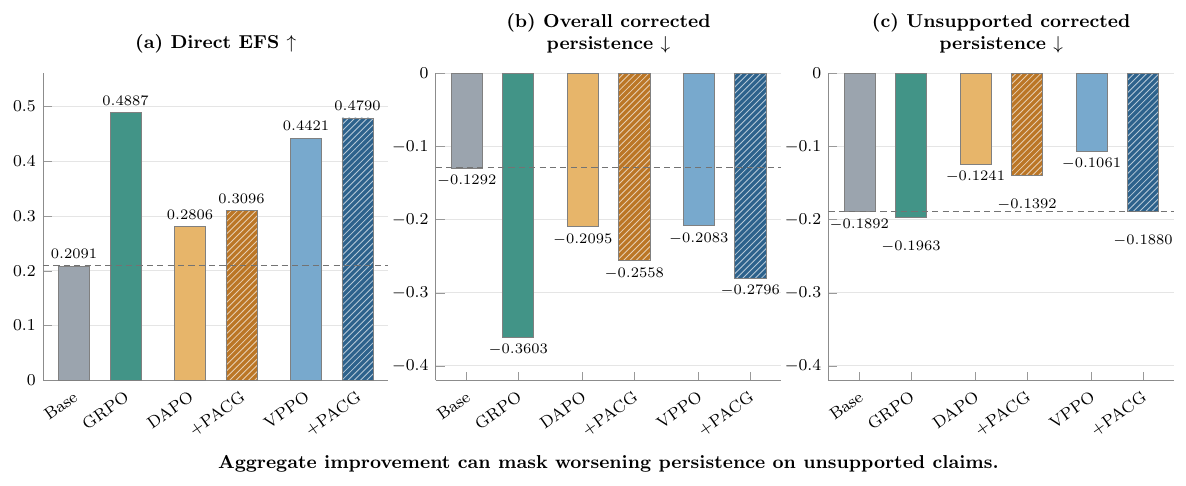}
\caption{\textbf{Aggregate improvement masks unsupported-claim deterioration.}
(a) Direct EFS ($\uparrow$); (b) overall and (c) unsupported corrected
persistence ($\downarrow$). DAPO and VPPO improve aggregate retractability
while worsening unsupported persistence; PACG mitigates this deterioration.
 }
\label{fig:aggregate_unsupported_persistence}
\end{figure}

\paragraph{Q6: Does higher visual sensitivity come with better retractability?}
No, and this is the central finding. As shown in
Figure~\ref{fig:aggregate_unsupported_persistence}, DAPO and VPPO both
increase EFS and reduce overall persistence, yet their unsupported
persistence rises by 0.0651 and 0.0831. GRPO increases EFS the most without
a resolved deterioration, so the two quantities are decoupled rather than
tied. DAPO and VPPO also apply roughly three times the effective positive
update mass of GRPO to routed claims, which may contribute to this observed difference and suggests that the effect is optimizer-dependent.

\Needspace{19\baselineskip}
\begin{wraptable}{r}{0.52\linewidth}
\vspace{-10pt}
\captionsetup{font=footnotesize,skip=4pt}
\centering
\caption{\textbf{Claim-level mechanism results.}
Accuracy (\%) is evaluated on the CrossBench4-2000 questions using seed-0 checkpoints.
DID is the Base-relative image-versus-text corrected-persistence contrast.
$^\dagger$ marks an unsupported-persistence change whose paired 95\%
question-clustered bootstrap CI excludes zero: versus Base for parent
optimizers, and versus the corresponding parent for PACG.
Full intervals are in Appendix~\ref{app:supported_persistence}.}
\label{tab:mechanism_results}
\small
\setlength{\tabcolsep}{3pt}
\renewcommand{\arraystretch}{1.15}
\resizebox{\linewidth}{!}{%
\begin{tabular}{@{}lccc@{}}
\toprule
Model & \shortstack{Unsup.\\Pers. $\downarrow$} & \shortstack{Img-vs-Text\\DID $\downarrow$} & \shortstack{Acc. (\%)\\$\uparrow$}\\
\midrule
Base & $-0.1892$ & 0.000 & 38.9\\
GRPO & $\mathbf{-0.1963}$ & $\mathbf{-0.167}$ & 59.0\\
DAPO & $-0.1241^\dagger$ & $-0.022$ & 62.2\\
VPPO & $-0.1061^\dagger$ & $-0.042$ & 63.4\\
\midrule
\rowcolor{oursblue}
DAPO+\pacg{} & $-0.1392^\dagger$ & $-0.016$ & 63.0\\
\rowcolor{oursbluestrong}
VPPO+\pacg{} & $-0.1880^\dagger$ & $-0.066$ & \textbf{64.3}\\
\bottomrule
\end{tabular}}
\vspace{-8pt}
\end{wraptable}

\paragraph{Q7: Does \pacg{} fix the decoupling it diagnoses?}
Partly for DAPO and substantially for VPPO. Table~\ref{tab:mechanism_results}
shows that PACG lowers unsupported persistence by 0.0151 and 0.0819 while
raising EFS and accuracy. VPPO+\pacg{} returns to the Base level, whereas
DAPO+\pacg{} recovers part of the gap. The phase trajectory in
Figure~\ref{fig:phase_trajectory} shows the same divergence over training.
DAPO drifts toward greater sensitivity but weaker retractability, while PACG
gains sensitivity without this drift. Image-versus-text DID worsens slightly
for DAPO+\pacg{} and improves for VPPO+\pacg{}, so this complementary
diagnostic depends on the optimizer. Further analyses are provided in
Appendix~\ref{app:supp_mechanism}.
\WFclear

\begin{figure}[!htbp]
\vspace{\dimexpr\baselineskip+14pt\relax}
\centering
\includegraphics[width=0.68\linewidth]{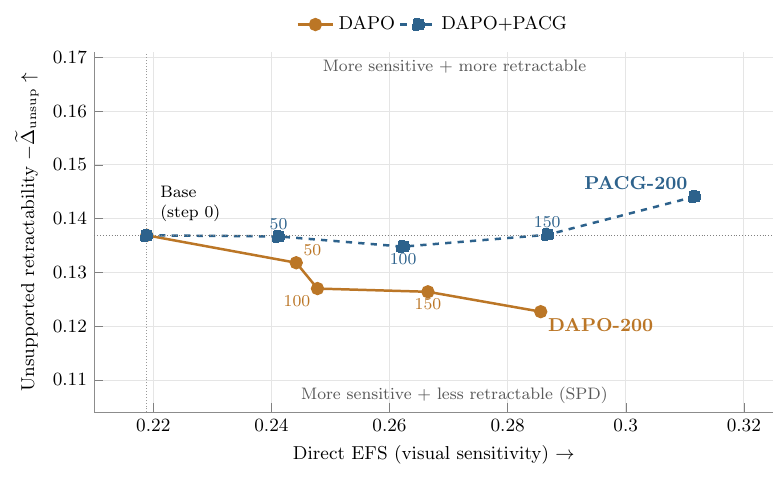}
\caption{\textbf{Sensitivity--retractability phase trajectory.}
Direct EFS versus negative unsupported persistence on the fixed
500-question subset. Right: more sensitive, up: more retractable.
Lines connect steps $0,50,100,150,200$; dotted lines mark Base.}
\label{fig:phase_trajectory}
\vspace{\dimexpr-\baselineskip+6pt\relax}
\end{figure}

%% file: section/8_conclusion.tex
\section{Conclusion}
\label{sec:conclusion}
In this paper, we identify Sensitivity--Persistence Decoupling as an
overlooked failure of credit assignment in multimodal RL. Using a
fixed-rollout counterfactual diagnostic, we show that DAPO and VPPO make
visual claims more sensitive to the image while simultaneously making
unsupported claims more persistent. To address this, we propose
Persistence-Aware Credit Gating (PACG), which tests whether each direct
visual claim retracts when its evidence is disrupted and selectively
attenuates the positive credit of claims that do not. Without
supported/unsupported labels or inference-time cost, PACG improves
unsupported-claim retractability, raises accuracy across nine multimodal
reasoning benchmarks for both outcome-driven and perception-aware optimizers,
and transfers to a larger scale, a newer backbone, and
HallusionBench. Our work shows that whether a policy looks at the image and
whether it would take back what the image does not support are fundamentally
distinct questions. We believe that holding visual claims accountable to
visual evidence, rather than only making them dependent on it, offers a
promising direction for multimodal credit assignment.

%% file: section/7_appendix.tex
% Appendix-only navigation follows the PAPO reference layout.
\renewcommand{\contentsname}{Appendix Contents}
\begingroup
\fontsize{11}{13}\selectfont
\color{pacgpromptblue}
\hypersetup{colorlinks=true,linkcolor=pacgpromptblue}
\setlength{\parskip}{0pt}
\makeatletter
\renewcommand{\l@section}[2]{\addpenalty{\@secpenalty}\vskip 4pt
  \@dottedtocline{1}{0em}{2.2em}{\bfseries #1}{\textcolor{pacgpromptblue}{#2}}}
\renewcommand{\l@subsection}[2]{\@dottedtocline{2}{2.2em}{3.0em}{#1}{\textcolor{pacgpromptblue}{#2}}}
\makeatother
\tableofcontents
\endgroup
\clearpage

% Keep manuscript typography; apply compact float spacing only in the appendix.
\setlength{\textfloatsep}{12pt plus 2pt minus 2pt}
\setlength{\floatsep}{10pt plus 2pt minus 2pt}
\setlength{\intextsep}{10pt plus 2pt minus 2pt}
\captionsetup{font=small,skip=5pt}
\renewcommand{\topfraction}{0.9}
\renewcommand{\bottomfraction}{0.8}
\renewcommand{\textfraction}{0.08}
\renewcommand{\floatpagefraction}{0.7}

\input{appendices/training_details}
\FloatBarrier
\input{appendices/pacg_procedure}
\FloatBarrier
\input{appendices/reference_protocol}

\FloatBarrier
\input{appendices/routing_audits}

\FloatBarrier
\input{appendices/mechanism_analysis}
\FloatBarrier
\input{appendices/task_results}

\FloatBarrier
\input{appendices/online_diagnostics}
\FloatBarrier
\input{appendices/efficiency}
\FloatBarrier
\input{appendices/prompts}
\input{appendices/limitations}
\clearpage
\input{appendices/qualitative_comparisons}

%% file: appendices/training_details.tex
\section{Training and Implementation Details}
\label{app:training_details}

\subsection{Shared Training Recipe}
We follow the VPPO training setup for Qwen2.5-VL, using ViRL39K,
two training epochs, and eight sampled responses per prompt. Qwen2.5-VL-32B
uses the same settings as the 7B model, including the PACG parameters,
except that its maximum response length is 4,096 instead of 2,048. The
Qwen3-VL-2B-Thinking runs use a 200-step budget and a maximum response
length of 8,192, with the same entropy penalty of 0.1. The vision
tower remains trainable. Rewards are binary answer correctness, without
an additional format reward. Unlike the VPPO reference setting's entropy
coefficient of 0.06, our reported recipe uses an entropy penalty of 0.1:
the minimized loss includes $+0.1\,\mathcal{H}$, rather than an entropy
bonus. Table~\ref{tab:pacg_training_config} summarizes the settings for all three backbones.
Qwen3-VL-2B-Thinking is the additional backbone evaluated in the main
results; its results are reported separately from Qwen2.5-VL-7B and Qwen2.5-VL-32B.

\begin{table}[!htbp]
\centering
\caption{\textbf{Training configurations.} All three backbones use an entropy penalty
of 0.1; training budgets and response-length limits are backbone-specific.}
\label{tab:pacg_training_config}
\footnotesize
\setlength{\tabcolsep}{4pt}
\renewcommand{\arraystretch}{1.15}
\begin{tabular}{@{}lccc@{}}
\toprule
\textbf{Setting} & \textbf{Qwen2.5-VL-7B} & \textbf{Qwen2.5-VL-32B} & \shortstack{\textbf{Qwen3-VL-2B}\\\textbf{Thinking}} \\
\midrule
Training corpus & ViRL39K & ViRL39K & ViRL39K \\
Training budget & 2 epochs & 2 epochs & 200 steps \\
Optimizer & AdamW & AdamW & AdamW \\
Learning rate & $10^{-6}$ & $10^{-6}$ & $10^{-6}$ \\
Adam betas & $(0.9,0.999)$ & $(0.9,0.999)$ & $(0.9,0.999)$ \\
Weight decay & 0.01 & 0.01 & 0.01 \\
Learning-rate schedule & \shortstack{Constant,\\no warmup} & \shortstack{Constant,\\no warmup} & \shortstack{Constant,\\no warmup} \\
Freeze vision tower & No & No & No \\
Rollout batch size & 384 & 384 & 384 \\
Global update batch size & 128 & 128 & 128 \\
Responses per prompt & 8 & 8 & 8 \\
Rollout temperature / top-$p$ & 1.0 / 0.99 & 1.0 / 0.99 & 1.0 / 0.99 \\
Maximum prompt length & 4,096 & 4,096 & 4,096 \\
Maximum response length & 2,048 & 4,096 & 8,192 \\
Entropy penalty coefficient & 0.1 & 0.1 & 0.1 \\
Training reward & Binary accuracy & Binary accuracy & Binary accuracy \\
Parameter precision & BF16 & BF16 & BF16 \\
Gradient checkpointing & Enabled & Enabled & Enabled \\
\bottomrule
\end{tabular}
\end{table}

\subsection{Parent Optimizers and Regularization}
\label{app:baselines}
DAPO and its PACG composition use dynamic sampling, asymmetric clipping
with lower and upper thresholds 0.20 and 0.28, and token-mean policy loss.
Plain GRPO does not inherit DAPO's dynamic filtering. PACG preserves the
chosen parent's reward and loss configuration and changes only the routed
positive token credits. Frozen-reference KL regularization is separate
from the entropy penalty and from VPPO's visual-dependency score. All run
configurations use frozen-reference KL regularization with
\texttt{low\_var\_kl} and coefficient 0.01.

\subsection{PACG Configuration}
We use retraction margin $m=0.05$, attenuation strength $\alpha=10$,
and minimum gate $g_{\min}=0.30$, with
$g_s=\max\{g_{\min},\exp[-\alpha\max(0,\widetilde{\Delta}(s)+m)]\}$.
Gating is restricted to routed direct visual claims and positive advantages;
negative advantages and non-routed tokens are unchanged. Overlapping spans
use the minimum gate. Residual calibration requires at least eight residual
response tokens; invalid routing, alignment, intervention, or residual
control falls back to unit gates. The training counterfactual is random
patch blackening with $14\times14$-pixel patches and blackening probability
0.5. Supported/unsupported labels are not used to assign gates.

\subsection{Composition with VPPO}
VPPO retains its visual-token selection fraction of 0.4 and minimum
trajectory-level advantage scaling of 0.9, with the upper scaling factor
determined dynamically by batch normalization. The visual-dependency score
uses the low-variance KL estimator under image perturbation. This score is
not a KL regularizer toward a frozen reference model. PACG is applied to
the resulting base credits with the same positive-only rule as for DAPO.

\subsection{Systems and Evaluation}
Training uses the repository's VERL/EasyR1-derived pipeline with FSDP,
BF16 parameters, gradient checkpointing, and vLLM rollout generation.
Micro-batching and rollout tensor parallelism are deployment settings,
not gate hyperparameters; the shared 7B launcher defaults to experience
micro-batch size 8 and update micro-batch size 4 per device. The measured 7B timing
configuration uses 16 PPU-810E accelerators
(Appendix~\ref{app:coverage_efficiency}). Evaluation uses
eight stochastic responses at temperature 1.0 and the same answer
extraction and verification pipeline across matched models. Evaluation
generation limits are distinct from the training response limits above.

%% file: appendices/pacg_procedure.tex
\section{Full PACG Training Procedure}
\label{app:pacg_algorithm}

Algorithm~\ref{alg:pacg} summarizes the training procedure.
PACG preserves the parent's rollout generation, reward computation, and
policy objective, modifying only eligible positive token credits.

\begin{algorithm}[!htbp]
\caption{Persistence-Aware Credit Gating for Group-Relative RLVR}
\label{alg:pacg}
\small
\begin{algorithmic}[1]
\Require Prompt $x=(q,I)$; frozen rollout policy
$\pi_{\theta_{\mathrm{old}}}$; base optimizer; frozen claim router;
degradation operator $\mathcal{C}$
\State Sample rollout group $\{y_i\}_{i=1}^{G}$ from
$\pi_{\theta_{\mathrm{old}}}(\cdot\mid x)$
\State Compute rewards and base token credits
$\{A^{\mathrm{base}}_{i,t}\}$ with the chosen optimizer
\State Initialize $\widetilde{A}_{i,t}\gets A^{\mathrm{base}}_{i,t}$
and $g_{i,t}^{\mathrm{span}}\gets 1$ for all response tokens
\For{$i=1,\ldots,G$}
    \State Route and token-align direct visual claims in $y_i$
    \State Construct residual response-token set $\mathcal{R}_i$
    \State Construct and validate $I_i^{\mathrm{cf}}=\mathcal{C}(I)$
    \If{routing, alignment, intervention, or residual control is invalid}
        \State Set all span gates for $y_i$ to one
        \State \textbf{continue}
    \EndIf
    \State Teacher-force fixed $y_i$ under $I$ and $I_i^{\mathrm{cf}}$
    using $\pi_{\theta_{\mathrm{old}}}$
    \State Compute $\widetilde{\Delta}(s)$ and $g_s$ for each routed claim
    \State Resolve overlapping claims using
    $g_{i,t}^{\mathrm{span}}=\min_{s\ni t}g_s$;
    retain one for tokens outside routed claims
    \State Form $\widetilde{A}_{i,t}$ with the positive-only rule in
    Equation~\ref{eq:pacg_advantage}
\EndFor
\State Update $\pi_\theta$ with the base clipped objective using
$\widetilde{A}_{i,t}$
\end{algorithmic}
\end{algorithm}

\subsection{Routing and Residual Calibration}
\label{app:routing_details}

The frozen router identifies contiguous spans asserting information read
directly from the image, including counts, OCR readings, geometric
properties, and spatial relations. Directness identifies a span's
evidential role; supported/unsupported labels are reserved for analysis.
For rollout $i$, let $\mathcal V_i$ contain tokens annotated as direct or
derived visual evidence, or as directly or indirectly visually dependent.
The residual control set is
\begin{equation}
\mathcal R_i=\{t:\ t\text{ is a valid response-text token and }
t\notin\mathcal V_i\}.
\end{equation}
PACG requires at least eight residual tokens. Invalid routing, alignment,
intervention, or residual control yields unit gates. The annotation
instructions and output schema are reproduced in
Appendix~\ref{app:router_annotation_prompt}.

For rollout $y_i$, a frozen router returns a character interval
$s=(a_s,b_s)$, which is aligned to the response-token set
\begin{equation}
    \mathcal{T}_s
    =
    \left\{
    t:
    [c_t^{\mathrm{start}},c_t^{\mathrm{end}})
    \cap
    [a_s,b_s)
    \neq\varnothing
    \right\},
    \label{eq:span_alignment}
\end{equation}
where
$[c_t^{\mathrm{start}},c_t^{\mathrm{end}})$
is token $t$'s decoded character interval.

The residual set is a practical within-trajectory reference, not a set
assumed to be strictly vision-independent. Its mean shift calibrates
trajectory-wide drift rather than estimating a pure non-visual baseline.
Routing is an eligibility decision, not an error detector: no
supported/unsupported label conditions the gate. Image-level degradation
likewise need not remove exactly the evidence relevant to a single claim.

\subsection{Formal Analysis}
\label{app:formal}
\label{sec:pacg_properties}
\label{app:pacg_properties}

We give two propositions and a remark about PACG's credit transformation.
They are conditional, fixed-update properties, not convergence guarantees.
Changes in future claim persistence are evaluated empirically in
Section~\ref{sec:mechanism_results}.

Write $g(\delta)=\max\{g_{\min},\exp(-\alpha[\delta+m]_+)\}$.
For $\alpha>0$ and $0<g_{\min}<1$, this function is non-increasing,
equals one for $\delta\leq-m$, and is strictly decreasing on
$(-m,\delta_{\mathrm f})$, where
$\delta_{\mathrm f}=\log(1/g_{\min})/\alpha-m$. On this interval,
\begin{equation}
g'(\delta)=-\alpha g(\delta)<0.
\label{eq:gate_monotonicity}
\end{equation}
Outside it the derivative is zero, apart from the boundary points.
If $\alpha=0$ or $g_{\min}=1$, the gate is identically one.

\paragraph{Conditional label-free credit redistribution.}
Fix a rollout policy and a sampling rule over valid, positively credited
routed claim tokens with a single covering claim. Let $A>0$ be the parent
token credit, $X$ the claim's corrected persistence, and
$Z\in\{\mathrm{sup},\mathrm{unsup}\}$ its evidential status.
All expectations and distributions below use this same positive-credit
population, not an unconditional collection of claims.
Set $C_z^{\mathrm{par}}=\mathbb E[A\mid Z=z]$ and
$C_z=\mathbb E[g(X)A\mid Z=z]$.

\begin{proposition}[Relative positive-credit retention]
\label{prop:redistribution}
Suppose both status groups have positive probability,
$0<C_z^{\mathrm{par}}<\infty$, and $A$ and $X$ are independent conditional
on $Z$. Suppose also that
$F_{\mathrm{unsup}}(x)\leq F_{\mathrm{sup}}(x)$ for all $x$,
where $F_z$ is the conditional CDF of $X$ given $Z=z$.
Then
\begin{equation}
\frac{C_{\mathrm{unsup}}}{C_{\mathrm{sup}}}
\leq
\frac{C_{\mathrm{unsup}}^{\mathrm{par}}}
     {C_{\mathrm{sup}}^{\mathrm{par}}}.
\end{equation}
For $\alpha>0$ and $g_{\min}<1$, the inequality is strict if
$F_{\mathrm{sup}}-F_{\mathrm{unsup}}>0$ on a set of positive
Lebesgue measure inside $(-m,\delta_{\mathrm f})$.
\end{proposition}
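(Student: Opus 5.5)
The plan is to use conditional independence to factor each gated credit, reduce the claimed ratio inequality to a comparison of two expectations of the gate, and then settle that comparison with first-order stochastic dominance.

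\textbf{Factorization.} Since $A$ and $X$ are independent given $Z$, and $g$ is bounded in $[g_{\min},1]$ so all expectations are finite,
\[
C_z=\mathbb E[g(X)\mid Z=z]\,C_z^{\mathrm{par}}.
\]
Write $G_z=\mathbb E[g(X)\mid Z=z]$. We have $G_z\ge g_{\min}>0$ and $C_z^{\mathrm{par}}\in(0,\infty)$, so $C_z>0$. The ratio $C_{\mathrm{unsup}}/C_{\mathrm{sup}}$ is therefore well defined and equals
\[
\frac{G_{\mathrm{unsup}}}{G_{\mathrm{sup}}}\cdot\frac{C_{\mathrm{unsup}}^{\mathrm{par}}}{C_{\mathrm{sup}}^{\mathrm{par}}}.
\]
The claim thus reduces to showing $G_{\mathrm{unsup}}\le G_{\mathrm{sup}}$, with strict inequality under the extra hypothesis.

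\textbf{Comparing the expected gates.} The hypothesis $F_{\mathrm{unsup}}\le F_{\mathrm{sup}}$ says that $X$ given unsup first-order stochastically dominates $X$ given sup. Since $g$ is non-increasing, the standard dominance result gives $\mathbb E[g(X)\mid\mathrm{unsup}]\le\mathbb E[g(X)\mid\mathrm{sup}]$.

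For the strict version I want an explicit integral formula. The function $g$ is continuous and piecewise $C^1$ with $g'=-\alpha g$ on $(-m,\delta_{\mathrm f})$ and $g'=0$ elsewhere, so it is absolutely continuous with limits $g(-\infty)=1$ and $g(+\infty)=g_{\min}$. Writing $g(x)=g_{\min}-\int_x^\infty g'(t)\,dt$ and applying Fubini (the integrand is nonnegative and has compact support) gives
\[
\mathbb E_z[g(X)]=g_{\min}+\int_{-m}^{\delta_{\mathrm f}}\alpha\, g(t)\,F_z(t)\,dt.
\]
Subtracting the two cases,
\[
G_{\mathrm{sup}}-G_{\mathrm{unsup}}=\int_{-m}^{\delta_{\mathrm f}}\alpha\, g(t)\bigl(F_{\mathrm{sup}}(t)-F_{\mathrm{unsup}}(t)\bigr)\,dt\ge 0.
\]
This uses $-g'=\alpha g\ge0$ together with the hypothesis. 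If $\alpha=0$ or $g_{\min}=1$, the gate is identically one and the two sides are equal.

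\textbf{Strictness.} Suppose $\alpha>0$ and $g_{\min}<1$. Then $\delta_{\mathrm f}>-m$ and $\alpha g(t)\ge\alpha g_{\min}>0$ on the interval. If $F_{\mathrm{sup}}-F_{\mathrm{unsup}}>0$ on a set of positive Lebesgue measure inside $(-m,\delta_{\mathrm f})$, the integral is strictly positive. Since $C_{\mathrm{unsup}}^{\mathrm{par}}>0$, this makes the ratio inequality strict.

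The main obstacle is justifying the integral formula rigorously, namely the Fubini/layer-cake step for a merely piecewise differentiable $g$ and a CDF that may have atoms. I would handle it by writing
\[
g(x)=g_{\min}+\int\alpha\, g(t)\,\mathbf 1\{(-m,\delta_{\mathrm f})\}(t)\,\mathbf 1\{x\le t\}\,dt
\]
and swapping the order of integration. The event $\{X\le t\}$ versus $\{X<t\}$ differs only at countably many $t$, which have Lebesgue measure zero, so atoms do not affect the result.
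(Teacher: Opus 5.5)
Your proposal is correct and follows the paper's proof. Like the paper, you factor $C_z=C_z^{\mathrm{par}}\,\mathbb E[g(X)\mid Z=z]$ by conditional independence, compare the expected gates using stochastic dominance and the non-increasing gate, and get strictness from $\int_{-m}^{\delta_{\mathrm f}}\alpha e^{-\alpha(x+m)}\bigl(F_{\mathrm{sup}}-F_{\mathrm{unsup}}\bigr)\,dx$; this equals your $\int\alpha g(t)\bigl(F_{\mathrm{sup}}(t)-F_{\mathrm{unsup}}(t)\bigr)\,dt$ because $g(t)=e^{-\alpha(t+m)}$ on that interval, and your Tonelli derivation supplies the justification the paper leaves implicit (the atom caveat is unnecessary, since $\mathbf 1\{X\le t\}$ integrates to exactly $F(t)$).
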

\begin{proof}
Conditional independence gives
$C_z=C_z^{\mathrm{par}}\mathbb E[g(X)\mid Z=z]$.
The gate is non-increasing, so the assumed stochastic dominance implies
$\mathbb E[g(X)\mid Z=\mathrm{unsup}]
\leq\mathbb E[g(X)\mid Z=\mathrm{sup}]$.
For $\alpha>0$ and $g_{\min}<1$, their difference in the opposite order is
\[
\int_{-m}^{\delta_{\mathrm f}}
\alpha e^{-\alpha(x+m)}
\bigl(F_{\mathrm{sup}}(x)-F_{\mathrm{unsup}}(x)\bigr)\,dx\geq0.
\]
The stated strictness condition makes this integral positive.
Since every gate is at least $g_{\min}>0$, division yields the result.
\end{proof}

No support label enters the gate. The conclusion concerns a relative
credit ratio, not an absolute ordering of credit between groups.
For overlapping claims the actual gate is the minimum of their gates;
the proposition applies if its assumptions are instead imposed on the
effective persistence $X=\max_{s\ni t}\widetilde\Delta(s)$ and an explicitly
defined token-status population. Dominance of individual claim
persistences alone does not establish dominance of this maximum.
Invalid-scoring fallbacks are excluded from Proposition~\ref{prop:redistribution}.

The assumptions concern the evolving rollout policy and are not
established by our fixed-Base-response evaluation. In that evaluation,
unsupported persistence is lower than supported persistence under Base
($-0.1892$ versus $-0.137$), whereas the mean ordering reverses under
DAPO ($-0.1241$ versus $-0.159$) and VPPO ($-0.1061$ versus $-0.184$;
Tables~\ref{tab:mechanism_results} and~\ref{tab:supported_persistence}).
The latter ordering is compatible with the proposed condition but proves
neither stochastic dominance nor conditional independence, and does not
verify either assumption on online positive-credit tokens.
The proposition therefore describes a possible operating regime rather
than an experimentally established explanation of PACG's gains.

\paragraph{Bounded deviation at a fixed actor update.}
\begin{proposition}[Credit bounds and unchanged surrogate branches]
\label{prop:bounded}
Hold the sampled tokens, loss masks, parent advantages, and gates fixed,
with no differentiation through advantages or gates. PACG satisfies
\begin{equation}
\operatorname{sign}(\widetilde A_{i,t})
=\operatorname{sign}(A^{\mathrm{base}}_{i,t}),\qquad
|\widetilde A_{i,t}-A^{\mathrm{base}}_{i,t}|
\leq(1-g_{\min})[A^{\mathrm{base}}_{i,t}]_+.
\label{eq:pacg_conservative}
\end{equation}
At the same policy parameters, PACG preserves the active branches of
the parent's clipped surrogate. Wherever its per-token gradient exists,
a gated token's surrogate gradient is multiplied by
$g^{\mathrm{span}}_{i,t}$. For the token-mean objective in
Equation~\ref{eq:clipped_objective}, with positive policy ratios and
$0\leq\epsilon_{\mathrm{low}}<1$, $\epsilon_{\mathrm{high}}\geq0$,
\begin{equation}
|\mathcal J_{\mathrm{PACG}}-\mathcal J_{\mathrm{par}}|
\leq(1-g_{\min})(1+\epsilon_{\mathrm{high}})
\mathbb E\!\left[
\frac{\sum_i\sum_{t\in\mathcal T_i^+} A^{\mathrm{base}}_{i,t}}
     {\sum_i |y_i|}
\right],
\label{eq:pacg_objective_bound}
\end{equation}
where $\mathcal T_i^+$ contains routed tokens with positive parent credit.
The expectations are assumed finite.
\end{proposition}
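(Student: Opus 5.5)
The plan is to prove the three parts of Proposition~\ref{prop:bounded} in order: the pointwise credit properties, the preservation of clipped branches, and the objective bound. All three rest on one fact established earlier: $g^{\mathrm{span}}_{i,t}\in[g_{\min},1]$ with $g_{\min}>0$. This holds because each $g_s=\max(g_{\min},e^{-\alpha v_s})$ with $v_s\ge0$ and $\alpha\ge0$, and a minimum over covering claims stays in this interval.

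\textbf{Sign and deviation.} The argument is a case split on Equation~\ref{eq:pacg_advantage}.
\begin{itemize}[leftmargin=*, nosep]
\item If $A^{\mathrm{base}}_{i,t}\le0$ or $t$ is not routed, then $\widetilde A_{i,t}=A^{\mathrm{base}}_{i,t}$. The sign is trivially equal and the deviation is $0$.
\item If $A^{\mathrm{base}}_{i,t}>0$ and $t$ is routed, then $\widetilde A_{i,t}=gA^{\mathrm{base}}_{i,t}$ with $g\ge g_{\min}>0$. This gives a positive sign and a deviation of $(1-g)A^{\mathrm{base}}_{i,t}\le(1-g_{\min})[A^{\mathrm{base}}_{i,t}]_+$.
\end{itemize}

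\textbf{Clipped branches.} For fixed $\rho>0$, write the per-token surrogate as $\ell(\rho,A)=\min(\rho A,\operatorname{clip}(\rho)A)$. For $A>0$ this is $A\min(\rho,\operatorname{clip}(\rho))$, so for $c>0$ we get $\ell(\rho,cA)=c\,\ell(\rho,A)$. The active branch, meaning which argument attains the min, depends only on $\operatorname{sign}(A)$ and $\rho$. Because gating preserves signs and uses positive multipliers, it preserves branches. Differentiating in $\theta$ with the gate held constant (stop-gradient) multiplies the per-token gradient by $g^{\mathrm{span}}_{i,t}$ wherever that gradient exists. Ungated tokens are identical.

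\textbf{Objective bound.} Write $\mathcal J_{\mathrm{PACG}}-\mathcal J_{\mathrm{par}}$ as the expectation of
\[
\frac{1}{\sum_i|y_i|}\sum_i\sum_{t\in\mathcal T_i^+}\bigl(g^{\mathrm{span}}_{i,t}-1\bigr)\,\ell\bigl(\rho_{i,t},A^{\mathrm{base}}_{i,t}\bigr).
\]
Only tokens in $\mathcal T_i^+$ contribute. For $A>0$ we have $0<\min(\rho,\operatorname{clip}(\rho))\le\operatorname{clip}(\rho)\le1+\epsilon_{\mathrm{high}}$. The condition $\epsilon_{\mathrm{low}}<1$ keeps the lower clip $1-\epsilon_{\mathrm{low}}$ positive, which keeps the clip factor positive. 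Therefore $0\le\ell\le(1+\epsilon_{\mathrm{high}})A^{\mathrm{base}}_{i,t}$. Combined with $0\le1-g\le1-g_{\min}$, each term is bounded in absolute value by $(1-g_{\min})(1+\epsilon_{\mathrm{high}})A^{\mathrm{base}}_{i,t}$. The triangle inequality and Jensen's inequality ($|\mathbb E X|\le\mathbb E|X|$), together with the assumed finiteness, then give Equation~\ref{eq:pacg_objective_bound}.

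The main obstacle is the bound $\min(\rho,\operatorname{clip}(\rho))\le1+\epsilon_{\mathrm{high}}$ for positive $A$, since an unclipped $\rho$ can be arbitrarily large. The min against the clipped value resolves this. I would also check that the normalizer $\sum_i|y_i|$ is unaffected by gating, so that the difference separates token by token.
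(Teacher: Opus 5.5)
Your proposal is correct and follows essentially the same route as the paper. Both arguments rest on $g^{\mathrm{span}}_{i,t}\in[g_{\min},1]$, on the positive homogeneity $S(\rho,gA)=g\,S(\rho,A)$ of the clipped surrogate for $A>0$ (the paper writes this directly as $A\min\{\rho,1+\epsilon_{\mathrm{high}}\}$, which is what your $\min(\rho,\operatorname{clip}(\rho))$ reduces to for $\rho>0$), and on a per-token bound $(1-g_{\min})(1+\epsilon_{\mathrm{high}})A^{\mathrm{base}}_{i,t}$ summed under the unchanged normalizer $\sum_i|y_i|$; the paper additionally notes that each per-token difference is nonnegative, so the triangle-inequality step is not needed.
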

\begin{proof}
The credit bounds follow from $g^{\mathrm{span}}_{i,t}\in[g_{\min},1]$;
all other credits are unchanged. For $A>0$, write the surrogate as
$S(\rho,A)=A\min\{\rho,1+\epsilon_{\mathrm{high}}\}$.
Thus $S(\rho,gA)=gS(\rho,A)$ for a fixed positive gate: branch selection
is unchanged and differentiation scales the token gradient by $g$.
At branch ties, the corresponding one-sided derivatives scale likewise.
Furthermore,
\[
0\leq S(\rho,A)-S(\rho,gA)
=(1-g)A\min\{\rho,1+\epsilon_{\mathrm{high}}\}
\leq(1-g_{\min})A(1+\epsilon_{\mathrm{high}}).
\]
Summing with the objective's nonnegative normalization proves the bound.
\end{proof}

The same argument applies to sequence-mean GRPO with its corresponding
weights in the bound. If tokens are masked, the same fixed masks and
normalization must be used on both sides. Unchanged regularizers cancel
in an objective-value comparison, but their gradients are not multiplied
by the gate. Per-token scaling need not preserve the direction of the
summed gradient or the eventual optimizer step. These results do not
assert a hard trust region, unchanged future clipping rates, or training
stability. Unit gates recover the parent objective, including when no
claim is routed or validity checks fail.

\paragraph{Invariance of residual calibration.}
\begin{remark}[Common additive shifts cancel exactly]
\label{rem:residual}
For fixed claim and residual token sets with nonempty residual control,
suppose $d_{i,t}=\eta_{i,t}+b_i$, where $b_i$ is shared by all response
tokens in rollout $i$. Then Equation~\ref{eq:corrected} gives
\[
\widetilde\Delta(s)=
\frac{1}{|\mathcal T_s|}\sum_{t\in\mathcal T_s}\eta_{i,t}
-\frac{1}{|\mathcal R_i|}\sum_{u\in\mathcal R_i}\eta_{i,u},
\]
which is exactly independent of $b_i$, whereas the raw shift retains
$b_i$. This cancellation motivates residual calibration and the
w/o Residual Corr.\ ablation (Table~\ref{tab:gate_ablation}); it does not
remove heterogeneous token-specific corruption effects or certify
claim-localized evidence removal.
\end{remark}

%% file: appendices/reference_protocol.tex
\section{Fixed-Reference Evaluation Protocol}
\label{app:crossbench_reference}

\subsection{CrossBench4 Reference Construction}

CrossBench4 comprises Geometry3K, MMK12, LogicVista, and MathVerse.
The primary mechanism evaluation uses 2,000 questions with four fixed
Qwen2.5-VL-7B Base responses each, yielding 8,000 trajectories.
Question selection is frozen before checkpoint evaluation and is blind to
answer correctness, span presence, evidence labels, and mechanism scores.
All four responses are retained, including incorrect answers and responses
without eligible direct claims.

The reference builder joins the frozen question list with the Base
annotation bank by dataset and sample ID. It requires four successfully
annotated, nonempty responses and a resolvable image per question;
incomplete inputs raise an error rather than replacing questions.

\subsection{Scoring and Aggregation}

Mechanism results use the seed-0 checkpoint for each trained model;
temporal comparisons use checkpoints from the seed-0 run.
Question-cluster bootstrap intervals quantify uncertainty over evaluation
questions, not variation across training seeds.

Each checkpoint teacher-forces the same responses and exact-aligned spans
under original and hash-fixed intervened images. Rollouts are averaged
within question, and paired bootstrap intervals resample question clusters
identified by dataset and sample ID. Responses without eligible spans
contribute no claim-level observation. Thus, 8,000 is the reference-bank
size, not the valid-observation count for every subgroup metric.

\subsection{Temporal Subset and Evaluation Scope}

Figure~\ref{fig:training_dynamics} and Figure~\ref{fig:phase_trajectory}
use 500 questions from this bank, retaining all four responses (2,000
trajectories). 
% Dataset-proportional quotas use largest-remainder rounding;
% within each dataset, questions are ranked by SHA256 with seed 20260907.
The selected IDs and source checksum are frozen across checkpoints.

Table~\ref{tab:mechanism_results}, Figure~\ref{fig:aggregate_unsupported_persistence},
and the ablation mechanism metrics in Table~\ref{tab:gate_ablation_mech}
use the full 2,000-question reference. The nine-benchmark accuracy results,
including Table~\ref{tab:gate_ablation_acc}, use free generation instead.

\input{appendices/benchmarks}

%% file: appendices/benchmarks.tex
\subsection{Benchmark Coverage}
\label{app:benchmarks}

Task-level accuracy is evaluated on MMK12, MathVerse, DynaMath,
MathVision, Geometry3K, We-Math, LogicVista, MMMU-Pro, and Clever-Count.
These benchmarks cover visual mathematics, geometry, counting, and
broader multimodal reasoning. Matched models share answer extraction and
verification, with eight sampled responses per question in the reported
mean-accuracy evaluations. Direct EFS and persistence are evaluated on
CrossBench4, rather than on all nine accuracy benchmarks.

\paragraph{HallusionBench evaluation.}
\label{app:hallusion_protocol}
We additionally evaluate the six Qwen2.5-VL-7B variants on
HallusionBench~\citep{guan2024hallusionbench}, using one checkpoint per model
rather than averaging over training seeds. Kimi-K3~\citep{team2026kimi} judges answer
correctness at a judge decoding temperature of 0.6, with one judgment
per response. We report the mean of eight evaluations, computing each
metric within an evaluation before averaging across the eight repeats.
aAcc measures per-answer accuracy; fAcc requires all questions associated
with an image to be correct; qAcc requires a question to be answered
correctly across its visual variants. These results use LLM judging,
unlike the rule-based answer verification in the nine-benchmark suite,
and do not enter its reported average.

The default mechanism intervention uses hash-fixed random-grid blackening
with $14\times14$-pixel patches and blackening probability 0.5. Additional
corruptions are compared in Appendix~\ref{app:intervention_robustness}.

%% file: appendices/routing_audits.tex
\section{Claim Routing and Annotation Audits}
\label{app:router_human_audit}

\subsection{Human Audit}

Two annotators label a stratified sample of 500 questions from
CrossBench4-2000. We evaluate direct visual-claim identification,
direct-versus-derived classification, and supported-versus-unsupported
classification. Table~\ref{tab:router_audit} separates agreement with human
consensus from agreement between the annotators.
\begin{table}[!htbp]
    \centering
    \caption{\textbf{Router and human audit on 500 CrossBench4 questions.}
Precision, recall, F1, and the first three $\kappa$ values use human
consensus as the reference.}
    \label{tab:router_audit}
    \fontsize{8.5}{10}\selectfont
    \setlength{\tabcolsep}{5pt}
    \renewcommand{\arraystretch}{1.18}
    \begin{tabular}{@{}lcccc@{}}
        \toprule
        \textbf{Task} & \textbf{Precision} & \textbf{Recall}
        & \textbf{F1} & \textbf{$\kappa$} \\
        \midrule
        Direct visual claim & 0.90 & 0.87 & 0.88 & 0.81 \\
        Direct vs.\ derived & 0.85 & 0.83 & 0.84 & 0.76 \\
        Supported vs.\ unsupported & 0.82 & 0.78 & 0.80 & 0.71 \\
        \midrule
        \multicolumn{5}{@{}l}{\textit{Human--human Cohen's $\kappa$ (same task order)}} \\
        Direct visual claim & --- & --- & --- & 0.84 \\
        Direct vs.\ derived & --- & --- & --- & 0.79 \\
        Supported vs.\ unsupported & --- & --- & --- & 0.74 \\
        \bottomrule
    \end{tabular}
\end{table}

Agreement is highest for direct-claim identification and lower for
evidence-status classification. PACG uses the former for routing and
reserves supported/unsupported labels for offline analysis.

\subsection{Router Selection}
\label{app:router_selection}

On a separate 200-question expert-annotated set, Qwen3.8-27B gives the
highest F1 (0.86). Qwen3.6-35B-A3B reaches 0.84 F1 with 165 versus
425 seconds per batch, a $2.6\times$ latency advantage; we use it as
the production router. Recall measures matched expert spans, whereas
coverage measures questions with at least one matched expert span.
\begin{table}[!htbp]
    \centering
    \caption{\textbf{Span-extractor comparison on 200 expert-annotated questions.}
Span coverage is the fraction of questions with at least one matched
expert span. Timing is wall-clock seconds per batch on eight A100 GPUs;
shading identifies the selected router.}
    \label{tab:router_selection}
    \fontsize{8.5}{10}\selectfont
    \setlength{\tabcolsep}{5pt}
    \renewcommand{\arraystretch}{1.25}
    \begin{tabular}{@{}lccccc@{}}
        \toprule
        \textbf{Router} & \textbf{Precision} & \textbf{Recall}
        & \textbf{F1} & \shortstack{\textbf{Span}\\\textbf{Coverage}}
        & \textbf{Timing (s)} \\
        \midrule
        Qwen3-4B & 0.68 & 0.63 & 0.65 & 76\% & 108 \\
        Qwen3.8-27B & 0.87 & 0.85 & 0.86 & 94\% & 425 \\
        \rowcolor{oursblue}
        \textbf{Qwen3.6-35B-A3B}
        & \textbf{0.85} & \textbf{0.83} & \textbf{0.84}
        & \textbf{92\%} & \textbf{165} \\
        \bottomrule
    \end{tabular}
\end{table}

\input{appendices/routing_example}

\subsection{Base Claim Annotation Census}
\label{app:crossbench_base_annotation}

The broader CrossBench4 census contains 5,228 questions and 20,912 Base
trajectories, with four responses per question across Geometry3K, MMK12,
LogicVista, and MathVerse. It is separate from the selected 2,000-question
mechanism reference. The annotation client receives the original images,
question, and trace, without the gold answer or source checkpoint identity.
Its request format is reproduced in
Appendix~\ref{app:crossbench_annotation_request}.

We estimate prevalence from the completed span annotations. Among 8,521
answer-correct trajectories, we count each trajectory once if at least
one exact-aligned span has category
\texttt{hallucinated\_or\_unsupported\_direct\_vef}.
This yields 2,370 trajectories (27.81\%); the category includes both
contradicted and unsupported direct visual claims. Exact alignment checks
text and character offsets, not label correctness. The confidence
threshold of 0.9 applies to a separate high-quality audit subset, not to
this prevalence estimate. Exported representative cases are selected
for human inspection.

\paragraph{Relationship to the claim-level unsupported rate.}
The 27.81\% reported in the introduction is a \emph{trajectory-level}
prevalence: it counts each of 8,521 answer-correct Base trajectories once
if \emph{any} of its spans is annotated as
\texttt{hallucinated\_or\_unsupported\_direct\_vef}. The 24.19\% in
Table~\ref{tab:avoidance_audit} is a \emph{claim-level} rate: it is the
fraction of qualifying direct claims that are annotated as unsupported,
averaged over responses with at least one qualifying claim and then over
questions with a defined value. The two quantities also use different populations: the census covers
answer-correct Base trajectories across four benchmarks, whereas the
free-generation audit covers successfully annotated MMK12 responses,
including both correct and incorrect answers, and applies a confidence
threshold of 0.9. They are therefore not expected to coincide. The trajectory-level
prevalence measures how often a rewarded trajectory contains at least one
problematic claim; the claim-level rate measures how often an individual
direct visual claim is unsupported.

\input{appendices/avoidance_audit}

%% file: appendices/routing_example.tex
\subsection{Qualitative Examples of Direct Visual Claim Routing}
\label{app:claim_routing_example}

The three examples below use real pre-RL responses from the MMK12 reference
bank. Each separates a claim's evidential role from its factual support:
both highlighted observations are routed, although only one agrees with
the image. The response still reaches the correct final option.

% Source: phase1/repro_imgfix_20260720/reference_mmk12_500_spans_Qwen_aligned.jsonl
% sample_id: mmk12_2f53458b788d; trace_source: grpo_0step; span IDs: 0 and 2.
% Image: audit pair_048.png, original (left) panel only, header excluded.
% This is a qualitative MMK12 case, not a sample claimed to belong to the
% broader CrossBench4 census. Ellipses explicitly mark omitted trace text.
\begin{tcolorbox}[breakable,
    title={Example 1: A correct answer with a contradicted visual claim},
    colback=blue!2,colframe=pacgpromptblue,colbacktitle=pacgpromptblue,
    coltitle=white,fonttitle=\bfseries,boxrule=0.5pt,arc=1pt,
    left=7pt,right=7pt,top=6pt,bottom=6pt]
\small
\textbf{Question (abridged).} Given the graph of $y=ax^2+bx+c$
($a\ne0$), which conclusion is correct?
Option D states that $ax^2+bx+c=-1$ has two distinct real roots.

\smallskip
\begin{minipage}[c]{0.26\linewidth}
    \centering
    \includegraphics[width=\linewidth,trim=0 0 520 25,clip]
        {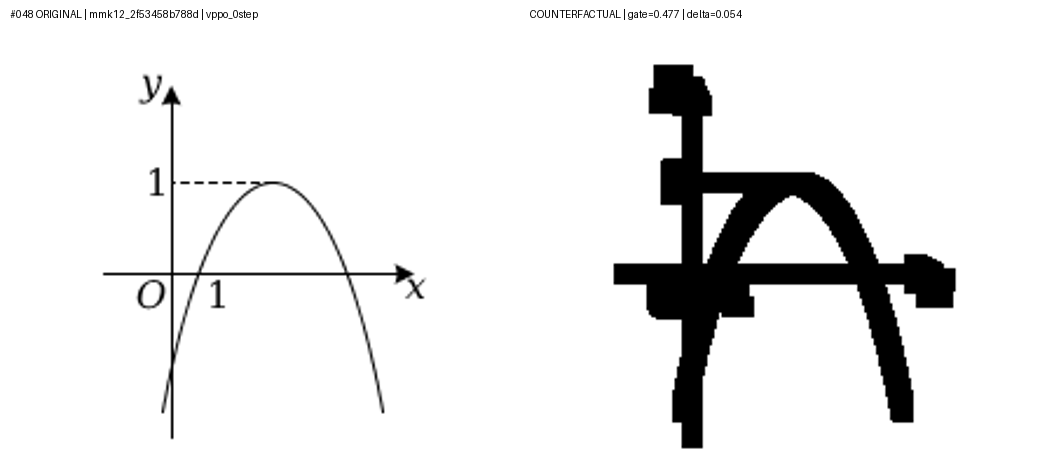}
\end{minipage}\hfill
\begin{minipage}[c]{0.71\linewidth}
\textbf{Original response excerpts:}

\smallskip
\colorbox{blue!12}{\strut The parabola opens downwards}\textcolor{black!65}{,
which means the coefficient $a$ is negative. So, $a < 0$.}

\smallskip
\textcolor{black!55}{[\ldots]}

\smallskip
\colorbox{red!12}{\parbox{\dimexpr\linewidth-2\fboxsep\relax}{%
The y-intercept of the parabola is at $y = c$, which is positive}}
\textcolor{black!65}{So, $c > 0$.}

\smallskip
\textcolor{black!55}{[\ldots]}
Given the above analysis, the correct conclusion is: $\boxed{D}$
\end{minipage}

\medskip
\setlength{\tabcolsep}{3pt}
\renewcommand{\arraystretch}{1.12}
\begin{tabular}{@{}p{0.24\linewidth}p{0.32\linewidth}p{0.37\linewidth}@{}}
\toprule
\rowcolor{blue!10}
\textbf{Excerpt} & \textbf{Evidential role} & \textbf{Offline status}\\
\midrule
\textcolor{blue!65!black}{Blue observation} & Direct visual claim & Supported\\
\textcolor{red!65!black}{Red observation} & Direct visual claim & Contradicted\\
\textcolor{black!65}{Gray deductions} & Derived / non-direct & Not used for routing\\
Final option D & Answer, not a visual claim & Correct\\
\bottomrule
\end{tabular}
\end{tcolorbox}

The graph opens downward and crosses the $y$-axis below zero, contradicting
the red claim. \textbf{Red and blue spans are both routed by PACG; their
support status is shown only for offline analysis.} The gray text draws
conclusions from observations rather than reading new facts from the image.

\smallskip
{\footnotesize Ground truth: D. Ellipses mark omissions; punctuation and math
typesetting are normalized. Sample: \texttt{mmk12\_2f53458b788d}.}

\input{appendices/routing_examples_additional}

%% file: appendices/routing_examples_additional.tex
% Real pre-RL MMK12 excerpts; provenance in figure/routing_examples_additional_source.json.

\medskip
\begin{tcolorbox}[breakable,
 title={Example 2: Extrema are not zeros},
 colback=blue!2,colframe=pacgpromptblue,colbacktitle=pacgpromptblue,
 coltitle=white,fonttitle=\bfseries,boxrule=0.5pt,arc=1pt,
 left=7pt,right=7pt,top=6pt,bottom=6pt]
\small
\textbf{Question (abridged).} From the plotted cubic $f(x)=x^3+bx^2+cx+d$, determine where
$y=\log_2(x^2+\frac23 bx+\frac c3)$ is decreasing.

\smallskip
\begin{minipage}[c]{0.26\linewidth}
\centering
\includegraphics[width=\linewidth,trim=0 0 520 25,clip]{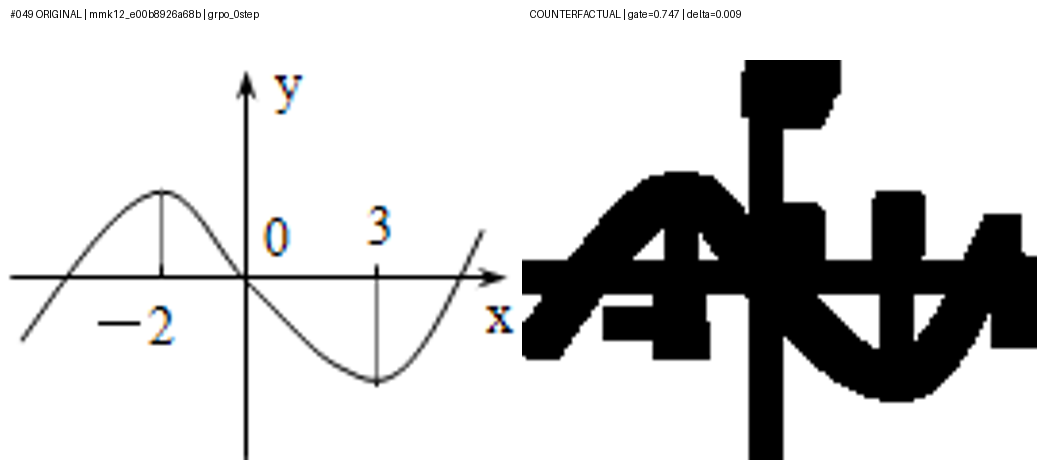}
\end{minipage}\hfill
\begin{minipage}[c]{0.71\linewidth}
\textbf{Original response excerpts:}

\smallskip
\colorbox{blue!12}{\parbox{\dimexpr\linewidth-2\fboxsep\relax}{The function has a local maximum at $x=-2$ and a local minimum at $x=3$.}}

\smallskip
\colorbox{red!12}{\parbox{\dimexpr\linewidth-2\fboxsep-2pt\relax}{The function crosses the x-axis at $x=-2$ and $x=3$}}, \textcolor{black!65}{so $-2$ and $3$ are roots of the polynomial.}

\smallskip
\textcolor{black!55}{[\ldots]} $\boxed{A}$
\end{minipage}

\medskip
\setlength{\tabcolsep}{3pt}
\renewcommand{\arraystretch}{1.12}
\begin{tabular}{@{}p{0.24\linewidth}p{0.32\linewidth}p{0.37\linewidth}@{}}
\toprule
\rowcolor{blue!10}
\textbf{Excerpt} & \textbf{Evidential role} & \textbf{Offline status}\\
\midrule
\textcolor{blue!65!black}{Blue observation} & Direct visual claim & Supported\\
\textcolor{red!65!black}{Red observation} & Direct visual claim & Contradicted\\
\textcolor{black!65}{Gray reasoning} & Derived / non-direct & Not used for routing\\
Final option A & Answer, not a visual claim & Correct\\
\bottomrule
\end{tabular}
\end{tcolorbox}
The marked locations are turning points above and below the horizontal axis, not crossings of that axis.
Both colored claims are routed; support labels remain offline annotations.

\smallskip
{\footnotesize Ground truth: A. Ellipses mark omissions; math typesetting
is normalized. Sample: \texttt{mmk12\_e00b8926a68b}.}

\medskip
\begin{tcolorbox}[breakable,
 title={Example 3: Reading a circuit versus inferring a voltage},
 colback=blue!2,colframe=pacgpromptblue,colbacktitle=pacgpromptblue,
 coltitle=white,fonttitle=\bfseries,boxrule=0.5pt,arc=1pt,
 left=7pt,right=7pt,top=6pt,bottom=6pt]
\small
\textbf{Question (abridged).} With the switch closed, $V_1=7.5\,\mathrm V$,
$V_2=9\,\mathrm V$, and supply voltage $12\,\mathrm V$, find the voltage across $L_2$.

\smallskip
\begin{minipage}[c]{0.26\linewidth}
\centering
\includegraphics[width=\linewidth,trim=0 0 520 25,clip]{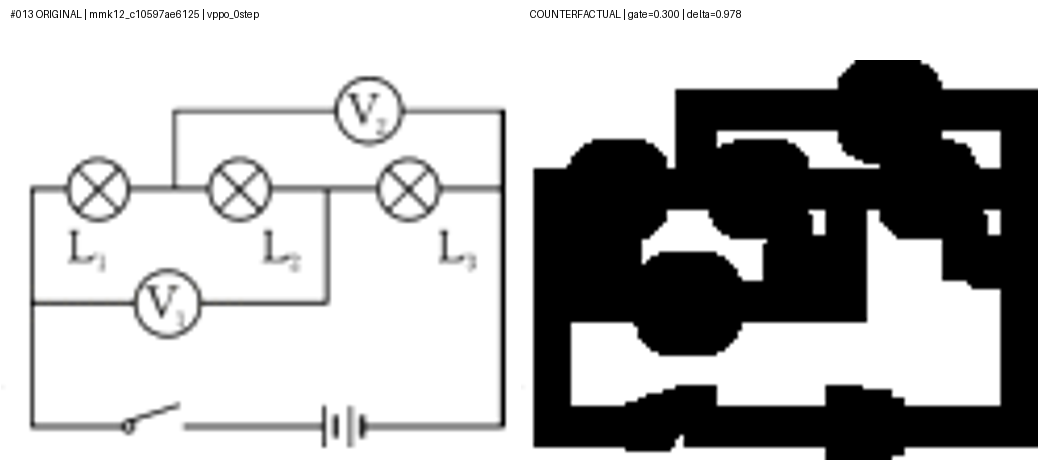}
\end{minipage}\hfill
\begin{minipage}[c]{0.71\linewidth}
\textbf{Original response excerpts:}

\smallskip
\colorbox{red!12}{\parbox{\dimexpr\linewidth-2\fboxsep\relax}{The circuit is in a parallel connection.}}

\smallskip
\colorbox{blue!12}{\parbox{\dimexpr\linewidth-2\fboxsep\relax}{$V_1$ measures the voltage across $L_1$ and $L_2$.}}

\smallskip
\textcolor{black!55}{[\ldots]} \textcolor{black!65}{Therefore, the voltage across $L_2$ is $4.5\,\mathrm V$.}

\smallskip
\textcolor{black!55}{[\ldots]} $\boxed{A}$
\end{minipage}

\medskip
\setlength{\tabcolsep}{3pt}
\renewcommand{\arraystretch}{1.12}
\begin{tabular}{@{}p{0.24\linewidth}p{0.32\linewidth}p{0.37\linewidth}@{}}
\toprule
\rowcolor{blue!10}
\textbf{Excerpt} & \textbf{Evidential role} & \textbf{Offline status}\\
\midrule
\textcolor{blue!65!black}{Blue observation} & Direct visual claim & Supported\\
\textcolor{red!65!black}{Red observation} & Direct visual claim & Contradicted\\
\textcolor{black!65}{Gray reasoning} & Derived / non-direct & Not used for routing\\
Final option A & Answer, not a visual claim & Correct\\
\bottomrule
\end{tabular}
\end{tcolorbox}
The lamps form a series path. The supported span identifies the two lamps bridged by the voltmeter; the later voltage statement is a computed conclusion.
Both colored claims are routed; support labels remain offline annotations.

\smallskip
{\footnotesize Ground truth: A. Ellipses mark omissions; math typesetting
is normalized. Sample: \texttt{mmk12\_c10597ae6125}.}

%% file: appendices/avoidance_audit.tex
\subsection{Free-Generation Avoidance Audit}
\label{app:avoidance_audit}

We audit each Qwen2.5-VL-7B checkpoint's own free generations on the
2,000-question MMK12 test set, with eight rollouts per question (16,000
expected responses per model). The audit examines whether changes in
unsupported visual claims accompany a broad reduction in visual content,
shorter answers, or substitution of uncertainty and abstention for claims.
All annotation-based metrics, including accuracy, use successfully annotated
responses; failed annotations are excluded rather than counted as incorrect.
Available rollouts are first macro-averaged within each question. Bracketed
intervals in Table~\ref{tab:avoidance_audit} are 95\% confidence intervals
obtained by bootstrapping question clusters.

We annotate these free generations with Qwen3.8-max using the multimodal
span-extraction request documented in Appendix~\ref{app:avoidance_annotation_prompt}
(Prompt~3), supplying the original images, question, and generated response
while hiding the source checkpoint's identity.

Direct Claim Rate is the fraction of responses containing at least one
qualifying direct visual claim. Direct Claims / Response counts those claims,
and Direct Token Ratio measures the fraction of response tokens covered by
their union. Claims must satisfy the direct-visual routing criteria, exact
text alignment, and annotation confidence of at least 0.9. Unsupported Claim
Rate is the fraction of qualifying direct claims annotated as unsupported,
averaged over responses with at least one qualifying claim and then over
questions with a defined value. Unsupported Response Rate is the fraction of
responses containing any such claim. Response length is measured in
assistant tokens.

\begin{table*}[!htbp]
    \centering
    \caption{\textbf{Free-generation visual-claim audit on MMK12.}
Question-macro means with 95\% question-cluster bootstrap intervals,
using successfully annotated responses. Rates and accuracy are percentages;
lengths are tokens. Uncertainty/abstention entries are point values or
ranges, not confidence intervals.}
    \label{tab:avoidance_audit}
    \begingroup
    \footnotesize
    \setlength{\tabcolsep}{4pt}
    \renewcommand{\arraystretch}{1.25}
    \resizebox{\linewidth}{!}{%
    \begin{tabular}{@{}lcccccccc@{}}
        \toprule
        \textbf{Model}
        & \shortstack{\textbf{Direct Claim}\\\textbf{Rate (\%)}}
        & \shortstack{\textbf{Direct Claims}\\\textbf{/ Response}}
        & \shortstack{\textbf{Direct Token}\\\textbf{Ratio (\%)}}
        & \shortstack{\textbf{Response}\\\textbf{Length}}
        & \shortstack{\textbf{Unsup. Claim}\\\textbf{Rate (\%)} $\downarrow$}
        & \shortstack{\textbf{Unsup. Resp.}\\\textbf{Rate (\%)} $\downarrow$}
        & \shortstack{\textbf{Accuracy}\\\textbf{(\%)}}
        & \shortstack{\textbf{Uncert./}\\\textbf{Abst. (\%)}} \\
        \midrule
        Base
        & \shortstack{77.46\\{[76.08, 78.81]}}
        & \shortstack{2.463\\{[2.387, 2.537]}}
        & \shortstack{11.18\\{[10.83, 11.54]}}
        & \shortstack{506.62\\{[498.61, 514.84]}}
        & \shortstack{24.19\\{[23.14, 25.27]}}
        & \shortstack{32.19\\{[30.88, 33.48]}}
        & \shortstack{43.02\\{[41.56, 44.51]}}
        & 10.2 \\
        GRPO
        & \shortstack{78.05\\{[76.60, 79.48]}}
        & \shortstack{2.382\\{[2.306, 2.459]}}
        & \shortstack{11.14\\{[10.75, 11.52]}}
        & \shortstack{526.64\\{[517.18, 535.97]}}
        & \shortstack{19.61\\{[18.53, 20.73]}}
        & \shortstack{26.61\\{[25.22, 27.99]}}
        & \shortstack{73.41\\{[71.76, 75.07]}}
        & 9.5--10.5 \\
        DAPO
        & \shortstack{76.12\\{[74.66, 77.57]}}
        & \shortstack{2.148\\{[1.912, 2.314]}}
        & \shortstack{10.30\\{[9.94, 10.66]}}
        & \shortstack{516.82\\{[508.45, 525.13]}}
        & \shortstack{18.84\\{[17.79, 19.88]}}
        & \shortstack{25.83\\{[24.48, 27.17]}}
        & \shortstack{81.60\\{[79.87, 83.28]}}
        & 9.5--10.5 \\
        DAPO + PACG
        & \shortstack{74.52\\{[73.07, 75.96]}}
        & \shortstack{2.211\\{[2.139, 2.282]}}
        & \shortstack{9.87\\{[9.54, 10.21]}}
        & \shortstack{528.46\\{[520.12, 536.96]}}
        & \shortstack{16.16\\{[14.11, 18.25]}}
        & \shortstack{25.62\\{[24.55, 27.10]}}
        & \shortstack{83.16\\{[81.56, 84.75]}}
        & 9.0--11.0 \\
        VPPO
        & \shortstack{79.41\\{[78.08, 80.75]}}
        & \shortstack{2.645\\{[2.562, 2.733]}}
        & \shortstack{9.95\\{[9.62, 10.28]}}
        & \shortstack{681.46\\{[670.84, 692.14]}}
        & \shortstack{17.44\\{[16.51, 18.41]}}
        & \shortstack{25.61\\{[24.33, 26.87]}}
        & \shortstack{81.73\\{[80.16, 83.29]}}
        & 9.5--10.5 \\
        VPPO + PACG
        & \shortstack{77.90\\{[76.46, 79.28]}}
        & \shortstack{2.463\\{[2.383, 2.543]}}
        & \shortstack{9.31\\{[8.99, 9.62]}}
        & \shortstack{697.47\\{[686.76, 708.45]}}
        & \shortstack{16.61\\{[15.69, 17.55]}}
        & \shortstack{24.12\\{[22.90, 25.37]}}
        & \shortstack{82.63\\{[81.06, 84.14]}}
        & 9.0--11.0 \\
        \bottomrule
    \end{tabular}}
    \endgroup
\end{table*}

\begin{table*}[!htbp]
    \centering
    \caption{\textbf{Response-length medians and annotation coverage.} Coverage is out of 16,000 expected
    responses per model. Claim counts are exact high-confidence direct claims.}
    \label{tab:avoidance_coverage}
    \small
    \setlength{\tabcolsep}{5pt}
    \renewcommand{\arraystretch}{1.2}
    \begin{tabular}{@{}lrrr@{}}
        \toprule
        Model & \shortstack{Median length\\(tokens)}
        & \shortstack{Annotated\\responses}
        & \shortstack{Direct\\claims} \\
        \midrule
        Base & 462.0 & 16,000/16,000 & 39,401 \\
        GRPO & 474.0 & 15,996/16,000 & 38,098 \\
        DAPO & 480.0 & 15,999/16,000 & 34,360 \\
        DAPO + PACG & 489.0 & 15,998/16,000 & 35,370 \\
        VPPO & 635.0 & 15,990/16,000 & 42,295 \\
        VPPO + PACG & 645.0 & 15,992/16,000 & 39,375 \\
        \bottomrule
    \end{tabular}
\end{table*}

Relative to DAPO, PACG lowers direct-claim rate by 1.60 percentage
points and unsupported-claim rate by 2.68 points; mean response length
increases from 516.82 to 528.46 tokens. With VPPO, the corresponding
rate reductions are 1.51 and 0.83 points, while mean length rises from
681.46 to 697.47 tokens. Median lengths also increase
(Table~\ref{tab:avoidance_coverage}). Thus, the lower unsupported-claim
rates are accompanied by continued visual-claim generation rather than
shorter responses. The reported uncertainty ranges remain descriptive.

%% file: appendices/mechanism_analysis.tex
\section{Supplementary Mechanism Analysis}
\label{app:supp_mechanism}

\subsection{Image-versus-Text Difference-in-Differences}
\label{app:image_text_did}
Let $\overline{\widetilde{\Delta}}^{\,I}_{M}$ and
$\overline{\widetilde{\Delta}}^{\,T}_{M}$ denote the mean corrected
persistence of model $M$ on direct visual claims under image and text
interventions, respectively. The image intervention holds the question
text and response fixed; the text intervention holds the original image
and response fixed while perturbing the question text. We report
\begin{equation}
\operatorname{DID}(M)=
\left(\overline{\widetilde{\Delta}}^{\,I}_{M}
-\overline{\widetilde{\Delta}}^{\,I}_{\mathrm{Base}}\right)
-\left(\overline{\widetilde{\Delta}}^{\,T}_{M}
-\overline{\widetilde{\Delta}}^{\,T}_{\mathrm{Base}}\right).
\end{equation}
Image and text measurements are paired by sample before aggregation.
Base therefore has DID zero; comparisons between PACG and its parent
subtract their Base-relative DID values. Lower values indicate a more
negative image-specific persistence change relative to the text control.
The accuracy column in Table~\ref{tab:mechanism_results} is evaluated
by free generation on the CrossBench4-2000 questions; it is separate
from teacher-forced scoring of the fixed Base responses.

\subsection{Supported-Claim Persistence and Bootstrap Intervals}
\label{app:supported_persistence}

Table~\ref{tab:supported_persistence} complements the main mechanism
table with supported-claim corrected persistence on the same fixed
reference. Lower values indicate greater relative retractability.
\begin{table}[!htbp]
    \centering
    \caption{\textbf{Supported-claim corrected persistence.}
    Lower is better. \textbf{Bold} and \underline{underlining} mark the best
    and second-best point estimates; blue shading identifies PACG variants.}
    \label{tab:supported_persistence}
    \setlength{\tabcolsep}{8pt}
    \renewcommand{\arraystretch}{1.15}
    \begin{tabular}{@{}lc@{}}
        \toprule
        \textbf{Model} & \textbf{Supported Persistence} $\downarrow$ \\
        \midrule
        Base & $-0.137$ \\
        GRPO & $\mathbf{-0.318}$ \\
        DAPO & $-0.159$ \\
        VPPO & $-0.184$ \\
        \midrule
        \rowcolor{oursblue}
        \textbf{DAPO+\pacg{}} & $-0.165$ \\
        \rowcolor{oursbluestrong}
        \textbf{VPPO+\pacg{}} & \underline{$-0.213$} \\
        \bottomrule
    \end{tabular}
\end{table}

Table~\ref{tab:mechanism_effect_cis} reports paired question-clustered
intervals for the principal effects. Positive changes in unsupported
persistence indicate weaker retractability. For the subgroup interaction
$\Delta_{\mathrm{unsup}}-\Delta_{\mathrm{sup}}$, positive values mean
greater deterioration on unsupported claims; negative PACG interactions
mean greater improvement on that subgroup.
\begin{table*}[!htbp]
    \centering
    \caption{\textbf{Paired mechanism effects with question-clustered
95\% bootstrap intervals.} Each effect is the first model minus the second.
Accuracy effects are on the $[0,1]$ scale; other effects use their metric's
native scale.}
    \label{tab:mechanism_effect_cis}
    \fontsize{8.2}{9.6}\selectfont
    \setlength{\tabcolsep}{5pt}
    \renewcommand{\arraystretch}{1.18}
    \begin{tabular}{@{}llcc@{}}
        \toprule
        \textbf{Comparison} & \textbf{Metric}
        & \textbf{Effect} & \textbf{95\% CI} \\
        \midrule
        DAPO $-$ Base & Unsupported persistence
        & $+0.0651$ & $[0.022,\,0.108]$ \\
        VPPO $-$ Base & Unsupported persistence
        & $+0.0831$ & $[0.036,\,0.130]$ \\
        DAPO+\pacg{} $-$ DAPO & Unsupported persistence
        & $-0.0151$ & $[-0.022,\,-0.008]$ \\
        VPPO+\pacg{} $-$ VPPO & Unsupported persistence
        & $-0.0819$ & $[-0.115,\,-0.049]$ \\
        \midrule
        DAPO $-$ Base & Direct EFS
        & $+0.0715$ & $[0.025,\,0.118]$ \\
        VPPO $-$ Base & Direct EFS
        & $+0.2330$ & $[0.160,\,0.306]$ \\
        DAPO+\pacg{} $-$ DAPO & Direct EFS
        & $+0.0290$ & $[0.006,\,0.052]$ \\
        VPPO+\pacg{} $-$ VPPO & Direct EFS
        & $+0.0369$ & $[0.010,\,0.064]$ \\
        \midrule
        DAPO $-$ Base & Overall persistence
        & $-0.0803$ & $[-0.122,\,-0.039]$ \\
        VPPO $-$ Base & Overall persistence
        & $-0.0791$ & $[-0.121,\,-0.037]$ \\
        DAPO+\pacg{} $-$ DAPO & Overall persistence
        & $-0.0463$ & $[-0.078,\,-0.015]$ \\
        VPPO+\pacg{} $-$ VPPO & Overall persistence
        & $-0.0713$ & $[-0.110,\,-0.033]$ \\
        \midrule
        DAPO $-$ Base & Unsup.--supp. interaction
        & $+0.0872$ & $[0.032,\,0.142]$ \\
        VPPO $-$ Base & Unsup.--supp. interaction
        & $+0.1300$ & $[0.065,\,0.195]$ \\
        DAPO+\pacg{} $-$ DAPO & Unsup.--supp. interaction
        & $-0.0096$ & $[-0.017,\,-0.002]$ \\
        VPPO+\pacg{} $-$ VPPO & Unsup.--supp. interaction
        & $-0.0524$ & $[-0.086,\,-0.019]$ \\
        \midrule
        DAPO+\pacg{} $-$ DAPO & Image-vs-text DID
        & $+0.0059$ & $[0.001,\,0.011]$ \\
        VPPO+\pacg{} $-$ VPPO & Image-vs-text DID
        & $-0.0246$ & $[-0.045,\,-0.004]$ \\
        \midrule
        GRPO $-$ Base & Unsupported persistence
        & $-0.0071$ & $[-0.025,\,0.011]$ \\
        GRPO $-$ Base & Direct EFS
        & $+0.2796$ & $[0.200,\,0.359]$ \\
        DAPO+\pacg{} $-$ DAPO & Accuracy
        & $+0.0079$ & $[0.001,\,0.015]$ \\
        VPPO+\pacg{} $-$ VPPO & Accuracy
        & $+0.0082$ & $[0.001,\,0.016]$ \\
        \bottomrule
    \end{tabular}
\end{table*}

\input{appendices/optimizer_credit_exposure}

\subsection{Checkpoint Training Dynamics and Sensitivity--Retractability Phase Trajectory}
\label{app:phase_trajectory}

\paragraph{Training dynamics reproduce the endpoint pattern.}
Figure~\ref{fig:training_dynamics} tracks DAPO and DAPO+\pacg{} from step 0
to 200 on the fixed 500-question subset of CrossBench4-2000. Under
DAPO, EFS increases from 0.2189 to 0.2856 while unsupported persistence
moves from $-0.1369$ to $-0.1227$, showing weaker retraction despite
greater visual sensitivity. PACG follows a different trajectory: EFS reaches
0.3117 while unsupported persistence reaches $-0.1441$. The checkpoint
dynamics therefore mirror the endpoint result and show that PACG prevents
the unsupported-persistence drift observed during DAPO training.

\begin{figure}[!htbp]
    \centering
    \includegraphics[width=\linewidth]{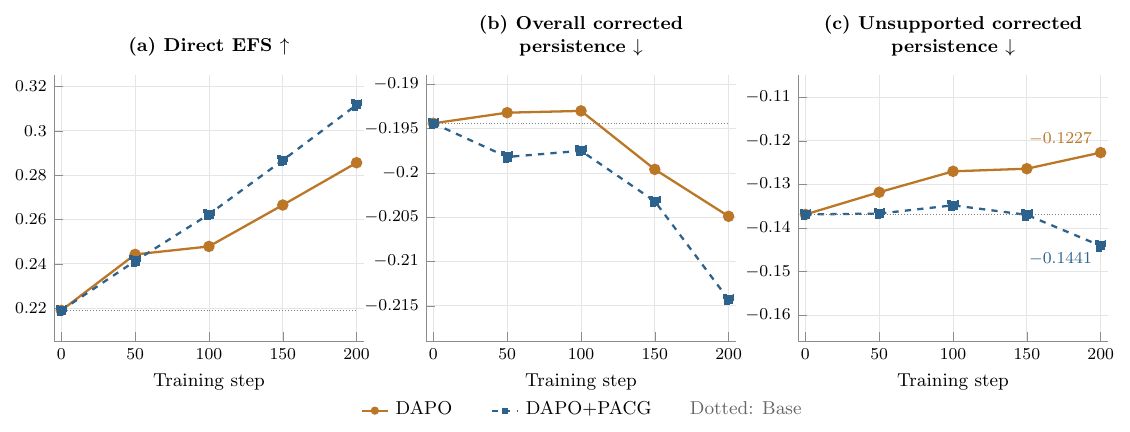}
    \caption{\textbf{Sensitivity and unsupported-claim persistence diverge
    during training.}
    DAPO and DAPO+\pacg{} are evaluated at matched steps
    $0,50,100,150,$ and $200$.
    Direct EFS increases for both methods, whereas DAPO's unsupported
    persistence becomes progressively less negative. PACG prevents the same
    drift and ends with greater unsupported retractability than DAPO.
    The temporal series uses 500 fixed questions sampled from CrossBench4-2000,
    with four Base rollouts each; subset averages can differ from full-set
    averages. Lines connect
    observed checkpoints rather than fitted curves.}
    \label{fig:training_dynamics}
\end{figure}

\paragraph{Sensitivity--retractability phase trajectory.}
Figure~\ref{fig:phase_trajectory} combines the direct EFS and unsupported
persistence measurements from Figure~\ref{fig:training_dynamics}.
DAPO ends below and to the right of Base, whereas DAPO+\pacg{} ends above
and to the right. Intermediate PACG checkpoints fluctuate around the
initial retractability level rather than improving monotonically.

\subsection{Robustness to Counterfactual Corruption}
\label{app:intervention_robustness}

We compare three corruptions using the same fixed reference and checkpoint
convention. Table~\ref{tab:intervention_robustness} reports absolute PACG
measurements and matched changes from DAPO. Valid rate is the fraction of
responses for which counterfactual scoring completes.
\begin{table}[htbp]
    \centering
    \caption{\textbf{Robustness to counterfactual corruption.}
Absolute DAPO+\pacg{} scores and the matched DAPO+\pacg{} minus DAPO
change in unsupported persistence (PACG Effect). Entries are point estimates.}
    \label{tab:intervention_robustness}
    \fontsize{8.5}{10}\selectfont
    \setlength{\tabcolsep}{5pt}
    \renewcommand{\arraystretch}{1.25}
    \begin{tabular}{@{}lccccc@{}}
        \toprule
        \textbf{Intervention}
        & \shortstack{\textbf{EFS}\\$\uparrow$}
        & \shortstack{\textbf{Overall}\\\textbf{Pers.} $\downarrow$}
        & \shortstack{\textbf{Unsup.}\\\textbf{Pers.} $\downarrow$}
        & \shortstack{\textbf{PACG Effect}\\$\downarrow$}
        & \shortstack{\textbf{Valid}\\\textbf{Rate}} \\
        \midrule
        Random mask       & 0.31 & $-0.2558$ & $-0.139$ & $-0.015$ & 96\% \\
        Coarse ink        & 0.28 & $-0.148$ & $-0.125$ & $-0.013$ & 94\% \\
        Foreground/salient & 0.33 & $-0.172$ & $-0.151$ & $-0.018$ & 93\% \\
        \bottomrule
    \end{tabular}
\end{table}

PACG lowers unsupported persistence under all three corruptions, with
effects from $-0.013$ to $-0.018$. This consistency supports robustness
of the PACG improvement to the tested corruption operators.

\paragraph{How the images are intervened on.}
Random mask partitions the image into $14\times14$-pixel grid cells and
independently blackens each cell with probability 0.5; a sample-specific
hash fixes the random pattern. Coarse ink thresholds the grayscale image
at 245 (on the 0--255 scale), expands the resulting dark-pixel mask with
a $9\times9$ maximum filter, and paints the expanded mask black.
Foreground/salient uses the bounding rectangle of pixels below the same
threshold, pads it on each side by 5\% of the larger image dimension
(clipped to image boundaries), and blackens the entire rectangle.
Thus, ``salient'' here refers to a coarse foreground heuristic, not a
learned saliency map or a claim-specific evidence region.

\begin{figure}[!htbp]
\centering
\begin{subfigure}{0.24\linewidth}
\includegraphics[width=\linewidth]{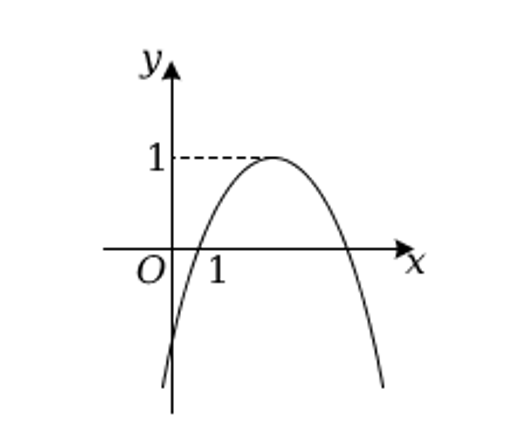}
\caption{Original}
\end{subfigure}\hfill
\begin{subfigure}{0.24\linewidth}
\includegraphics[width=\linewidth]{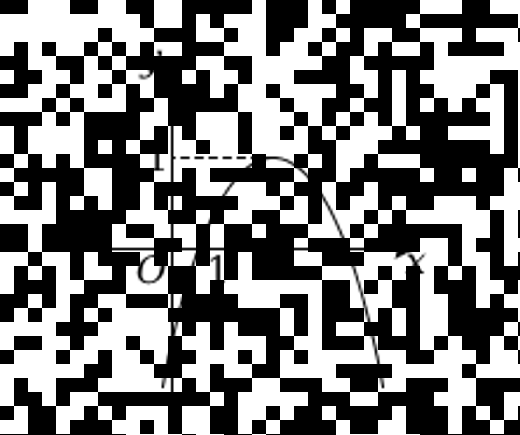}
\caption{Random mask}
\end{subfigure}\hfill
\begin{subfigure}{0.24\linewidth}
\includegraphics[width=\linewidth]{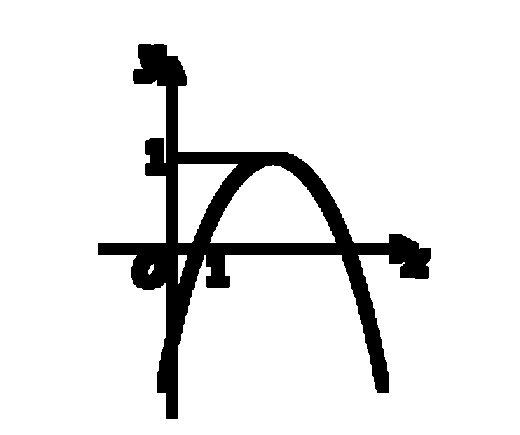}
\caption{Coarse ink}
\end{subfigure}\hfill
\begin{subfigure}{0.24\linewidth}
\includegraphics[width=\linewidth]{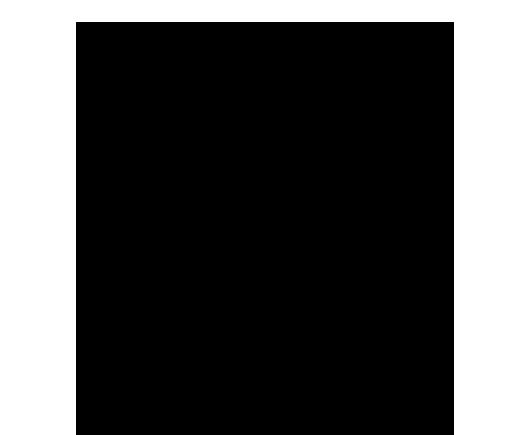}
\caption{Foreground box}
\end{subfigure}
\caption{\textbf{Three image interventions on the same example.}
Illustrative outputs of the evaluation operators using their default
parameters on the original panel recovered from a saved MMK12 audit image.
Random mask removes grid cells; coarse ink expands dark strokes;
foreground masking removes their padded bounding rectangle. These are
image-level controls, not verified claim-localized evidence removals.
The illustration does not add observations to
Table~\ref{tab:intervention_robustness}.}
\label{fig:intervention_examples}
\end{figure}

%% file: appendices/optimizer_credit_exposure.tex
\subsection{Online Positive-Credit Exposure}
\label{app:optimizer_credit_exposure}

We compare the original GRPO, DAPO, and VPPO parent runs over training
steps 181--200. Unlike the fixed-reference mechanism evaluation, this
analysis concerns online rollouts used for actor updates. The frozen
PACG router identifies direct visual claims, which are aligned to response
tokens. The population includes all routed direct claims, without
supported/unsupported filtering.

We report two complementary metrics. The positive routed-token rate,
$P_{\mathrm{pos}}$, measures the fraction of routed direct-claim tokens
receiving positive parent credit. Effective positive update mass,
$M_{\mathrm{eff}}$, measures the positive update mass applied to those
tokens. Parent credit is measured before PACG and includes VPPO's own
credit shaping. Metrics are aggregated from tokens to rollouts, then to
steps, with equal weighting across the 20 steps.

\begin{table}[!htbp]
    \centering
    \small
    \caption{\textbf{Online positive-credit exposure over steps 181--200.}
    Results concern routed direct visual claims in the parent training
    runs. Values are point estimates.}
    \label{tab:optimizer_credit_exposure}
    \setlength{\tabcolsep}{8pt}
    \begin{tabular}{@{}lccc@{}}
        \toprule
        Metric & GRPO & DAPO & VPPO \\
        \midrule
        Positive routed-token rate, $P_{\mathrm{pos}}$ (\%) & 71.8 & 66.3 & 78.9 \\
        Effective positive update mass, $M_{\mathrm{eff}}$ & 0.35 & 0.93 & 1.00 \\
        \bottomrule
    \end{tabular}
\end{table}

Table~\ref{tab:optimizer_credit_exposure} separates the frequency of
positive credit from its effective magnitude. Positive routed-token rates
are broadly comparable across the three runs, but DAPO and VPPO exhibit
substantially greater effective positive update mass than GRPO. In
particular, DAPO has a lower positive-token rate than GRPO yet a higher
update mass. This pattern is consistent with optimizer-dependent exposure
to positive updates, rather than the presence of credit broadcasting alone,
being relevant to the observed persistence differences. The comparison
is descriptive and does not establish that positive-credit exposure
causes SPD.

%% file: appendices/task_results.tex
\section{Additional Task-Level Results}
\label{app:additional_task_results}
\label{app:further_results}

\subsection{Compatibility with Plain GRPO}
\label{app:grpo_pacg}

We compose PACG with plain GRPO on the same nine benchmarks as
Table~\ref{tab:main_results}. The GRPO row reproduces that table.
\begin{table*}[!htbp]
    \centering
    \caption{\textbf{PACG composed with plain GRPO.}
Accuracy (\%); GRPO+\pacg{} uses mean accuracy@8. Bold marks the higher
displayed value in each column. Entries are point estimates.}
    \label{tab:grpo_pacg}
    \fontsize{7.3}{8.8}\selectfont
    \setlength{\tabcolsep}{2.3pt}
    \renewcommand{\arraystretch}{1.3}
    \resizebox{\linewidth}{!}{%
    \begin{tabular}{@{}l*{10}{c}@{}}
        \toprule
        \textbf{Method}
        & \textbf{MMK12} & \shortstack{\textbf{Math}\\\textbf{Verse}}
        & \shortstack{\textbf{Dyna}\\\textbf{Math}}
        & \shortstack{\textbf{Math}\\\textbf{Vision}}
        & \shortstack{\textbf{Geometry}\\\textbf{3K}}
        & \textbf{We-Math} & \shortstack{\textbf{Logic}\\\textbf{Vista}}
        & \shortstack{\textbf{MMMU-}\\\textbf{Pro}}
        & \shortstack{\textbf{Clever-}\\\textbf{Count}}
        & \textbf{Avg.} \\
        \midrule
        GRPO
        & 72.2 & 67.8 & \textbf{65.1} & 30.1 & 42.0 & 67.8
        & 47.1 & 38.3 & 81.0 & 56.8 \\
        \rowcolor{oursblue}
        \textbf{GRPO+\pacg{}}
        & \textbf{73.5} & \textbf{68.1} & 65.0 & \textbf{30.2}
        & \textbf{42.3} & \textbf{68.8} & \textbf{47.3}
        & \textbf{39.0} & \textbf{81.7} & \textbf{57.3} \\
        \bottomrule
    \end{tabular}}
\end{table*}

GRPO+\pacg{} raises the reported average from 56.8 to 57.3. The largest
displayed gains occur on MMK12 ($+1.3$ percentage points), We-Math ($+1.0$), MMMU-Pro
($+0.7$), and Clever-Count ($+0.7$); DynaMath changes from 65.1 to 65.0.
These results extend the task-level evaluation beyond DAPO and VPPO.
The full nine-benchmark component ablation is reported in
Table~\ref{tab:gate_ablation_acc}.

\subsection{Additional Component Mechanism Metrics}
\label{app:full_ablation_results}
The complete nine-benchmark accuracy breakdown and unsupported persistence
are reported in Table~\ref{tab:gate_ablation}. The table below adds direct
EFS and overall corrected persistence under the same reference protocol.
Variant definitions follow Table~\ref{tab:gate_ablation}.
\begin{table}[!htbp]
\centering
\caption{\textbf{Additional component mechanism metrics.}
Point estimates; lower persistence indicates stronger retraction.
$^\dagger$ denotes a degenerate EFS value associated with training collapse,
not improved visual grounding.}
\label{tab:gate_ablation_mech}
\small
\setlength{\tabcolsep}{7pt}
\begin{tabular}{@{}lccc@{}}
\toprule
Variant & Direct EFS $\uparrow$ & Corr. Pers. $\downarrow$ & Unsup. Pers. $\downarrow$\\
\midrule
DAPO & 0.2806 & $-0.2095$ & $-0.1241$\\
w/o Residual Corr. & 0.4099 & $-0.2410$ & $-0.1278$\\
w/o Direct Routing & $3.0366^\dagger$ & $-0.1766$ & $-0.0374$\\
w/o Positive-Only & 0.2046 & $-0.2224$ & $-0.1301$\\
\rowcolor{oursblue}
PACG (full) & 0.3096 & $\mathbf{-0.2558}$ & $\mathbf{-0.1392}$\\
\bottomrule
\end{tabular}
\end{table}

\subsection{Variation Across Training Seeds}
\label{app:seed_variance}

Table~\ref{tab:seed_variance} reports the nine-benchmark average across
three training seeds for each matched Qwen2.5-VL-7B optimizer.
Both improvements exceed the reported seed-level standard deviations and
are significant under two-sided Welch's $t$-tests ($p<0.05$), providing
evidence that the gains in Table~\ref{tab:main_results} extend beyond the
observed training-seed variation.

\begin{table}[!htbp]
\centering
\caption{\textbf{Seed variance} (Qwen2.5-VL-7B; nine-benchmark Avg., \%;
mean $\pm$ sample standard deviation over three training seeds).
$p$: two-sided Welch's $t$-test between each \pacg{} composition and its
parent ($n=3$ per group); $\Delta$ is in percentage points.}
\label{tab:seed_variance}
\small
\setlength{\tabcolsep}{6pt}
\renewcommand{\arraystretch}{1.15}
\begin{tabular}{@{}lccc@{}}
\toprule
Model & Avg. (\%) & $\Delta$ vs.\ parent & $p$ \\
\midrule
DAPO & $58.1 \pm 0.40$ & -- & -- \\
DAPO+\pacg{} & $59.9 \pm 0.52$ & $+1.8$ & $0.010$ \\
VPPO & $59.8 \pm 0.40$ & -- & -- \\
VPPO+\pacg{} & $60.9 \pm 0.47$ & $+1.1$ & $0.038$ \\
\bottomrule
\end{tabular}
\end{table}

\input{appendices/further_results}

%% file: appendices/further_results.tex
\subsection{Sensitivity to the Persistence Margin}
\label{app:margin_sensitivity}

We vary $m\in\{0.00,0.05,0.10\}$ while fixing $\alpha=10$ and
$g_{\min}=0.30$. Among these settings, $m=0.05$ gives the highest
MMK12 accuracy (81.2\%), an EFS of 0.31, and retains
81\% of positive credit under the reported retention metric.
\begin{table*}[!htbp]
    \centering
    \caption{\textbf{Sensitivity to the persistence margin.}
Gate strength $\alpha=10$ and floor $g_{\min}=0.30$ are fixed.
Accuracy and retention are percentages. Shading marks the selected
operating point; bold marks the best accuracy, EFS, and persistence values.
Entries are point estimates.}
    \label{tab:margin_sensitivity}
    \setlength{\tabcolsep}{5pt}
    \renewcommand{\arraystretch}{1.25}
    \resizebox{\linewidth}{!}{%
    \begin{tabular}{@{}cccccccc@{}}
        \toprule
        $m$ & $\alpha$ & $g_{\min}$
        & \shortstack{MMK12\\Accuracy (\%) $\uparrow$}
        & EFS $\uparrow$
        & \shortstack{Corrected\\Persistence $\downarrow$}
        & \shortstack{Unsupported\\Persistence $\downarrow$}
        & \shortstack{Positive-credit\\retention (\%)} \\
        \midrule
        0.00 & 10 & 0.30 & 81.0 & \textbf{0.34} & $-0.10$ & $-0.08$ & 95 \\
        \rowcolor{oursblue}
        \textbf{0.05} & 10 & 0.30 & \textbf{81.2} & 0.31
        & $-0.23$ & $-0.14$ & 81 \\
        0.10 & 10 & 0.30 & 81.0 & 0.28
        & $\mathbf{-0.28}$ & $\mathbf{-0.20}$ & 65 \\
        \bottomrule
    \end{tabular}}
\end{table*}

The zero-margin setting matches ungated DAPO's MMK12 accuracy (81.0\%)
while retaining 95\% of positive credit.
At $m=0.10$, retention falls to 65\%, EFS to 0.28, and accuracy to
81.0\%. Although persistence improves further, the sensitivity and
accuracy reductions favor $m=0.05$ as the joint operating point in
this sweep.

%% file: appendices/online_diagnostics.tex
\section{Online Training Diagnostics}
\label{app:online_training_diagnostics}

These statistics are measured on each run's own online trajectories.
They complement the fixed-reference mechanism evaluation. Each method is
shown for one representative seed; faint curves show recorded values and solid curves
trailing 10-step means. Sparse GRPO+\pacg{} logs retain their recorded
steps without interpolation.

\subsection{Gate Dynamics}
\label{app:gate_dynamics}

Across the three PACG compositions, the mean span gate decreases
gradually, around half of positive direct-claim tokens are downweighted,
and a smaller fraction reaches the gate floor
(Figure~\ref{fig:gate_dynamics}). The token-multiplier mean, logged as
positive-credit retention, is not weighted by advantage magnitude.
\begin{figure}[H]
    \centering
    \includegraphics[width=\linewidth]{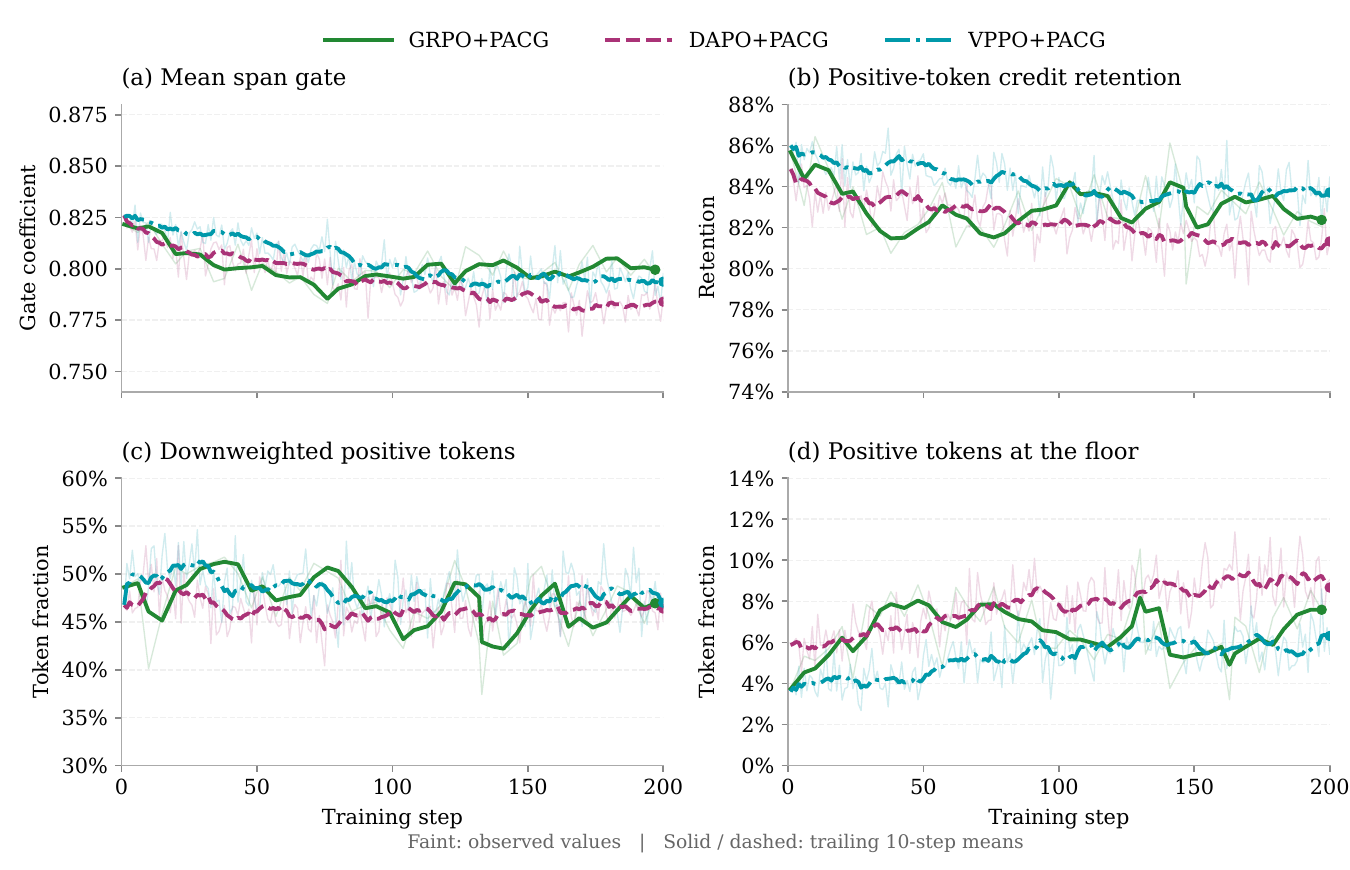}
    \caption{\textbf{PACG gate dynamics.}
(a) Mean span gate; (b) mean multiplier over positive direct-claim tokens;
(c) fraction of these tokens downweighted; (d) fraction at the gate floor.
Faint curves show observations and solid curves trailing 10-step means.}
    \label{fig:gate_dynamics}
\end{figure}

\subsection{Online Accuracy Reward}

Figure~\ref{fig:training_reward} compares each parent optimizer with its
PACG composition. PACG preserves GRPO's accuracy-reward trajectory and
raises the later trajectory for DAPO and VPPO.
\begin{figure}[H]
    \centering
    \includegraphics[width=\textwidth]
    {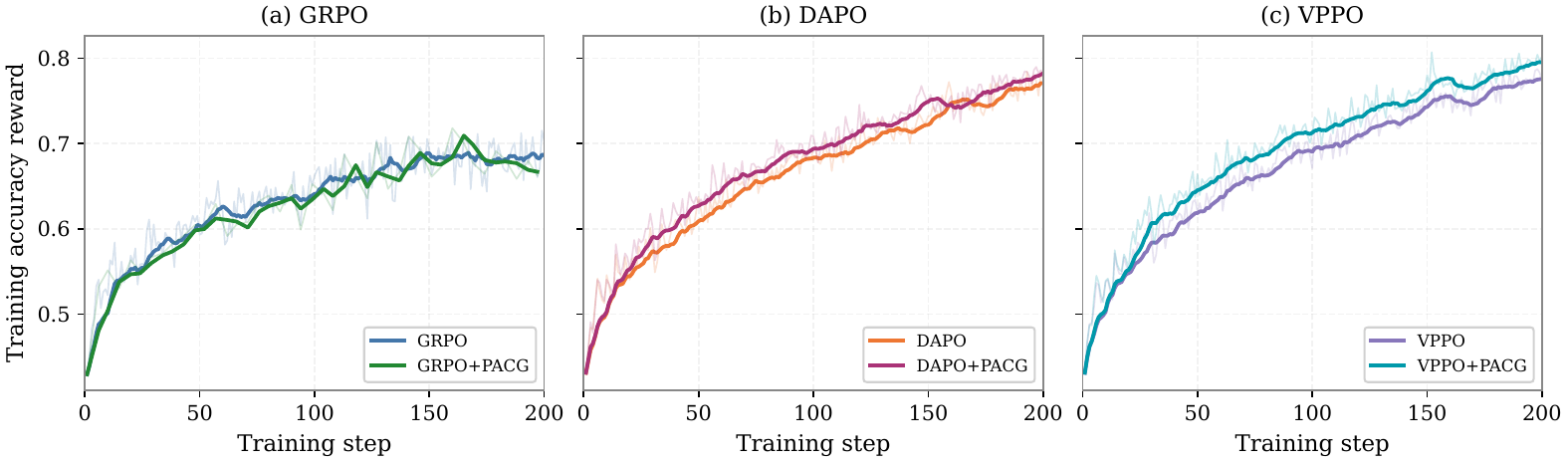}
    \caption{\textbf{Online training-reward dynamics.}
Paired parent/PACG comparisons of \texttt{reward/accuracy} through step 200.
Faint curves show logged values and solid curves trailing 10-step means.}
    \label{fig:training_reward}
\end{figure}

\subsection{VPPO Visual-Selector Scores and Gate Overlap}

VPPO+\pacg{} generally gives higher scores to selected tokens and lower
scores to unselected tokens (Figure~\ref{fig:online_vppo_visual_scores}),
showing greater separation in the online selector signal. These
policy-dependent token populations differ from the fixed direct claims
used to measure EFS and retractability.
\begin{figure}[H]
    \centering
    \includegraphics[width=\linewidth]{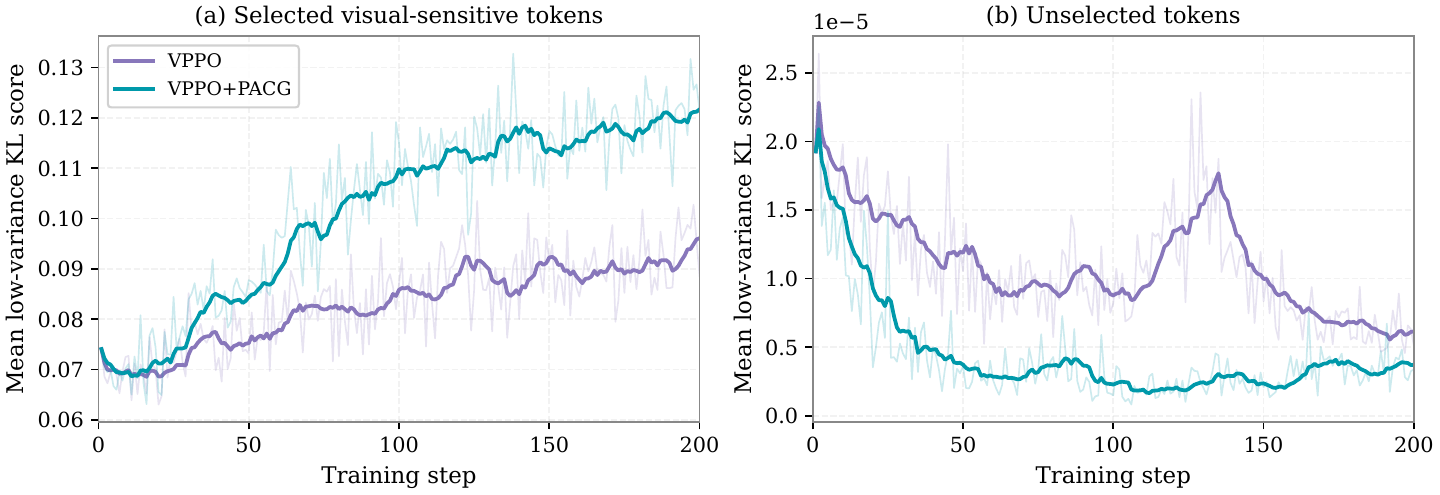}
    \caption{\textbf{Online VPPO visual-selector scores.}
Mean low-variance KL scores for (a) selected and (b) unselected tokens.
The panels use different vertical scales.}
    \label{fig:online_vppo_visual_scores}
\end{figure}

The overlap shows that visual-token selection and credit attenuation
have intersecting but distinct scopes.

\begin{figure}[H]
    \centering
    \includegraphics[width=\linewidth]{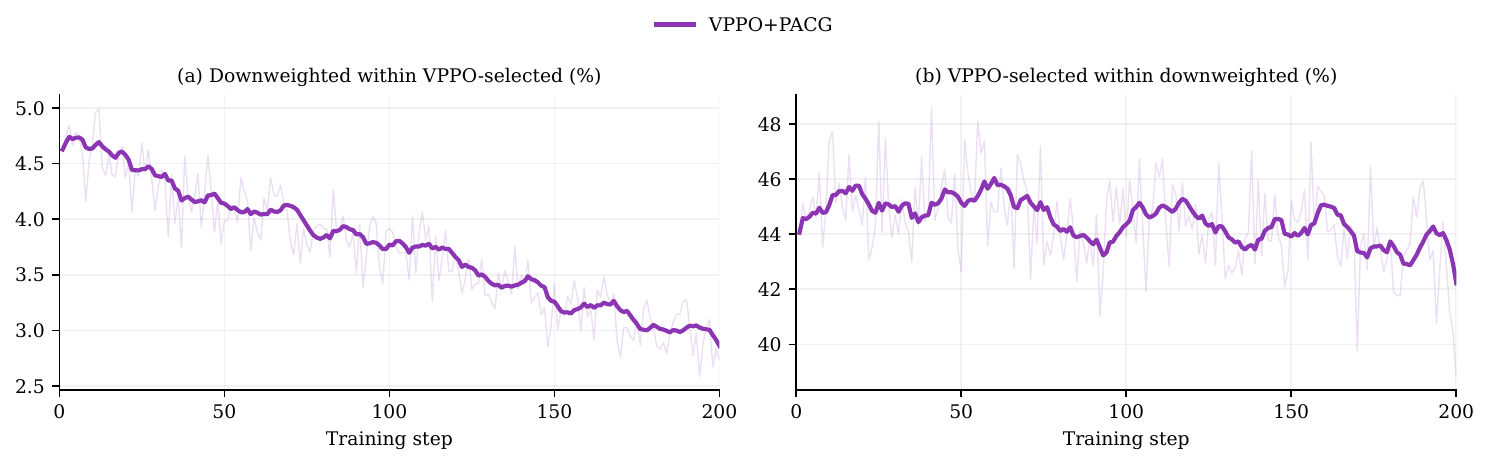}
    \caption{\textbf{Overlap between VPPO selection and PACG attenuation.}
(a) Fraction of VPPO-selected tokens downweighted by PACG;
(b) fraction of PACG-downweighted tokens selected by VPPO.
The two panels use different denominators.}
    \label{fig:vppo_pacg_overlap}
\end{figure}

\subsection{Optimization and Routing Coverage}
\label{app:additional_online_dynamics}

Figures~\ref{fig:six_run_online_dynamics} and~\ref{fig:online_persistence_coverage}
report the original exported values through step 200 without offsets or
time corrections. DAPO+\pacg{} uses $m=0.05$. Optimization statistics,
routing coverage, and selector overlap provide complementary views of
how the gate operates on the evolving response distribution.
\begin{figure}[H]
    \centering
    \includegraphics[width=0.90\linewidth]{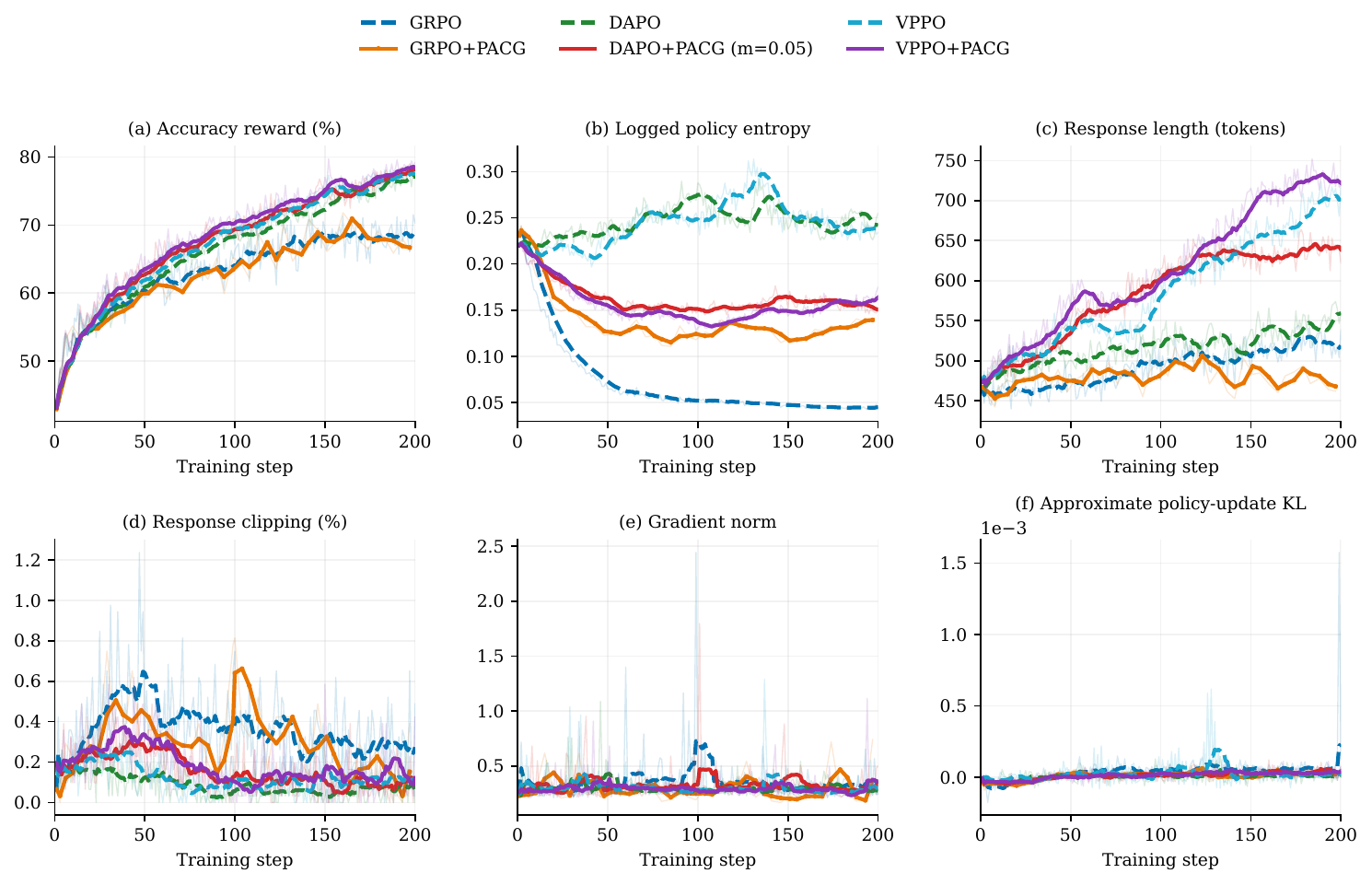}
    \caption{\textbf{Training dynamics across six runs.}
Accuracy reward, entropy, response length, clipping rate, gradient norm,
and approximate policy-update KL. Dashed curves denote parents and
solid curves their PACG compositions. Policy-update KL is distinct from
reference-model KL and image-intervention EFS.}
    \label{fig:six_run_online_dynamics}
\end{figure}

\begin{figure}[H]
    \centering
    \includegraphics[width=0.90\linewidth]{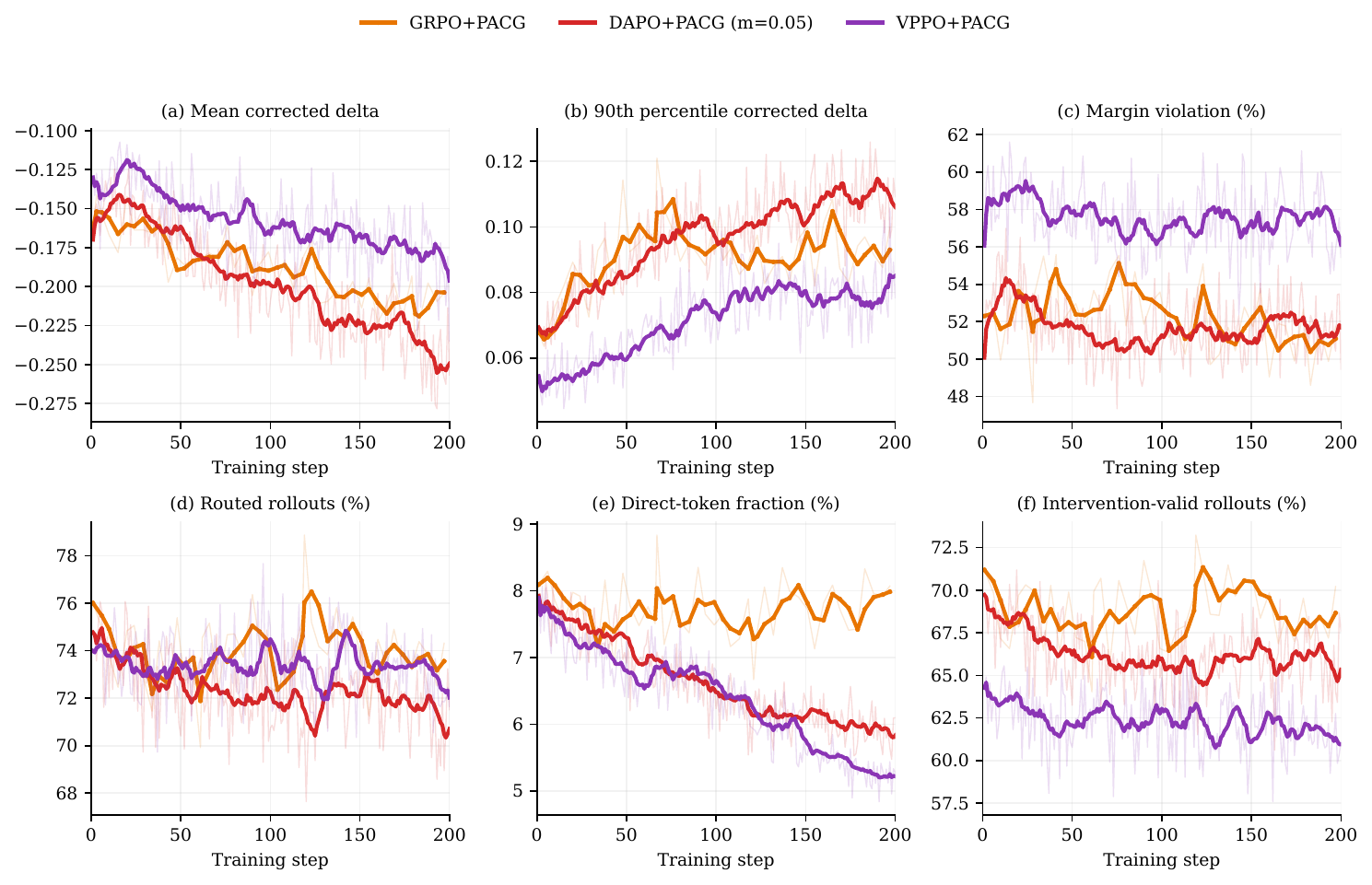}
    \caption{\textbf{Online persistence and routing coverage.}
(a)--(c) Mean corrected delta, its 90th percentile, and margin-violation
rate over eligible spans. (d)--(f) Routed-rollout rate, direct-token fraction,
and intervention-valid rate. The 90th percentile describes the span
distribution, not a confidence interval.}
    \label{fig:online_persistence_coverage}
\end{figure}

Figure~\ref{fig:online_persistence_coverage} separates average
counterfactual response, its upper tail, and the population entering the
gate. Its spans include both supported and unsupported claims.

%% file: appendices/efficiency.tex
\section{Training Efficiency}
\label{app:coverage_efficiency}

We compare DAPO and PACG ($m=0.05$) on Qwen2.5-VL-7B over training
steps 1--200 using 16 PPU-810E accelerators.
Table~\ref{tab:coverage_efficiency} gives stage-level and total times.
\begin{table}[htbp]
    \centering
    \caption{\textbf{Training efficiency on Qwen2.5-VL-7B using
16 PPU-810E accelerators.} Mean seconds per step over steps 1--200.
Total time includes validation, checkpointing, and other work.}
    \label{tab:coverage_efficiency}
    \fontsize{8.5}{10}\selectfont
    \setlength{\tabcolsep}{8pt}
    \renewcommand{\arraystretch}{1.25}
    \begin{tabular}{@{}lccc@{}}
        \toprule
        \textbf{Stage} & \textbf{DAPO} & \textbf{PACG}
        & \shortstack{\textbf{Overhead}\\\textbf{vs. DAPO}} \\
        \midrule
        Rollout & 148.8 & 174.6 & -- \\
        Original-image teacher forcing & 46.7 & 49.2 & -- \\
        Reference log-probability (KL) & 44.9 & -- & -- \\
        Augmented scoring & -- & 51.2 & -- \\
        Counterfactual teacher forcing & -- & 56.6 & -- \\
        Blocking annotation wait & -- & 159.4 & -- \\
        Actor update & 197.1 & 207.1 & -- \\
        \midrule
        \rowcolor{oursblue}
        \textbf{Total step time (s)} & \textbf{464.3} & \textbf{730.4}
        & \shortstack{$1.57\times$\\{\scriptsize(+57.3\%)}} \\
        Total logged 200-step time (h) & 25.8 & 40.6 & -- \\
        \bottomrule
    \end{tabular}
\end{table}

Mean step time increases from 464.3 to 730.4 seconds ($1.57\times$, or
57.3\% overhead), with 159.4 seconds spent waiting for annotation.
The corresponding 200-step totals are 25.8 and 40.6 hours. PACG adds
no inference-time computation.

\begin{figure}[htbp]
    \centering
    \includegraphics[width=\linewidth]{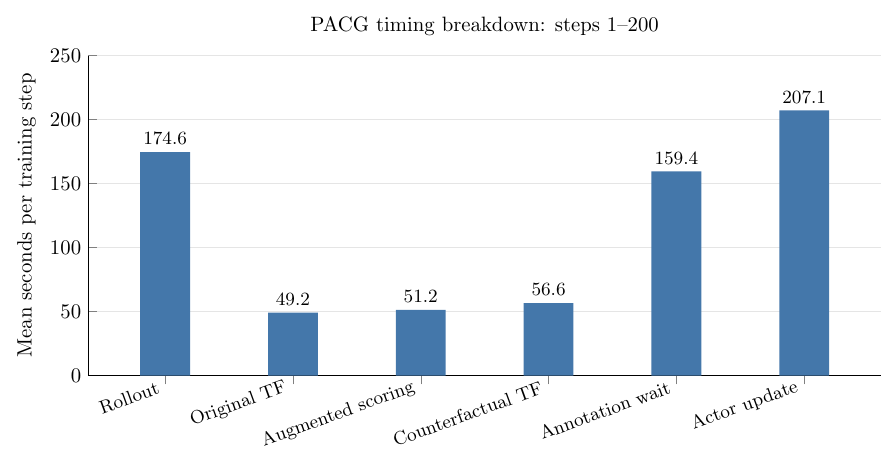}
    \caption{\textbf{Training-time breakdown for PACG ($m=0.05$).}
    Mean seconds per step over steps 1--200. TF denotes teacher forcing.
    Augmented scoring corresponds to \texttt{timing\_s/aug}.}
    \label{fig:training_timing_breakdown}
\end{figure}

%% file: appendices/prompts.tex
\section{Prompts and Templates}
\label{app:prompts}

The following prompts document span annotation, free-generation auditing,
and the standardized reasoning template used in all reported runs.

\subsection{Multimodal Span Annotation Prompt}
\label{app:router_annotation_prompt}

The following boxes reproduce the English span-annotation instructions
used in the reported runs.
The instruction and output-format portions are separated for readability;
they form one text prompt in the implementation. The request also includes
the question's image(s) as image-content items, rather than substituting
an image caption. The sample payload contains the sample identifier,
question, and trace, truncated to the configured maximum character length.
No gold answer is included. These are the exact prompt texts issued in the
reported runs.

\begin{pacgpromptbox}{Multimodal span annotation instructions}
You are a strict multimodal reasoning span annotator. You will see the
user question, its corresponding image, and a model-generated reasoning trace.

\textbf{Task.} Identify atomic reasoning spans in the trace and determine
which are direct Visual Evidence-Functional (VEF) spans. Do not judge
whether the final answer is correct. Do not conflate whether visual evidence
actually supports a span with whether that span is direct VEF.

\medskip
\textbf{Core principles}
\begin{itemize}[leftmargin=1.4em,nosep]
    \item Direct VEF is a functional label, not a correctness label.
    \item A span can be direct VEF if it claims information read directly
    from the current image, table, OCR, or spatial layout.
    \item An incorrect, hallucinated, or image-contradicting visual assertion
    can still be direct VEF.
    \item Label the relation to the image separately using the evidence
    statuses specified below.
\end{itemize}

\medskip
\textbf{A direct VEF span must satisfy all of the following}
\begin{enumerate}[leftmargin=1.4em,nosep]
    \item Its text is the smallest complete proposition occurring verbatim
    and contiguously in the trace.
    \item Its function is to read, describe, or restate visual, OCR, chart,
    counting, attribute, spatial, or topological information in the current image.
    \item The proposition cannot be obtained directly from the question
    text alone; it requires the current image or chart.
    \item It is an assertion of direct observation, without calculation,
    general rules, physical/geometric deduction, option analysis, or a final answer.
    \item It is not a hypothetical condition, metadiscourse, or merely
    a restatement of an answer option.
\end{enumerate}

\medskip
\textbf{Not direct VEF}
\begin{itemize}[leftmargin=1.4em,nosep]
    \item Facts explicitly stated in the question.
    \item Hypotheses, such as \emph{If the particle is positively charged}.
    \item General knowledge, such as \emph{the slope of a velocity-time graph
    gives acceleration}.
    \item Mathematical calculations, physical/geometric deductions,
    derived conclusions, option judgments, or final answers.
    \item Incomplete phrases, such as \emph{point a to point b} or
    \emph{from the graph}.
    \item Pure symbolic reasoning, formula substitution, or answer formatting.
    \item Sentences describing only the reasoning process or method.
    \item Meta-judgments about missing, unprovided, or indeterminable
    information, unless they directly describe visible blank areas,
    occlusion, or missing regions in the image.
\end{itemize}

\medskip
\textbf{Separate observation from inference.}
Split \emph{The velocity increases linearly, indicating a constant acceleration}
into \emph{The velocity increases linearly} (direct VEF; OCR/chart or visual
relation) and \emph{indicating a constant acceleration} (derived VEF or
symbolic deduction).

\medskip
\textbf{Missing or indeterminable information.}
Statements such as \emph{we don't have the value of y\_5 directly provided},
\emph{cannot be determined}, or \emph{not given} are usually not direct VEF.
Only consider direct VEF when the span directly describes a visible blank,
occlusion, or missing region; otherwise use derived VEF, non-VEF control,
or other.

\medskip
\textbf{Evidence status for each direct VEF span}
\begin{itemize}[leftmargin=1.4em,nosep]
    \item \texttt{supported}: the image clearly supports the proposition.
    \item \texttt{contradicted}: the image clearly negates the proposition.
    \item \texttt{unsupported}: the span claims to read information directly
    from the image, but sufficient supporting evidence cannot be found,
    without a clear contradiction.
    \item \texttt{unknown}: image evidence is insufficient to make a judgment.
\end{itemize}
A contradicted span remains direct VEF. These statuses describe evidential
support, not the VEF type.

\medskip
\textbf{Analysis categories.}
Judge visual dependency and consistency with the original image separately:
\begin{enumerate}[leftmargin=1.4em,nosep]
    \item Direct visual dependency, image-supported:
    \texttt{supported\_direct\_vef}.
    \item Direct visual dependency, contradicted or unsupported:
    \texttt{hallucinated\_or\_unsupported\_direct\_vef}.
    \item No direct visual dependency: \texttt{non\_vef\_reasoning}.
    \item Indirect visual dependency, a conclusion derived from a visual
    observation: \texttt{derived\_vef}.
    \item Direct VEF whose truth cannot be judged from the image:
    \texttt{unknown\_direct\_vef}.
\end{enumerate}
Do not omit false, fabricated, or image-contradicting visual assertions.
If their semantic function claims direct image reading, label them direct
VEF and record the lack of support through evidence status and analysis category.

\medskip
\textbf{Label vocabulary}

\texttt{span\_group}: \texttt{direct\_vef}, \texttt{derived\_vef},
\texttt{non\_vef\_control}, \texttt{other}.

\texttt{function\_type}: \texttt{visual\_relation},
\texttt{spatial\_topology}, \texttt{ocr\_chart},
\texttt{counting\_attribute}, \texttt{uncertainty},
\texttt{cross\_modal\_verification}, \texttt{derived\_vef},
\texttt{symbolic\_deduction}, \texttt{answer\_formatting}, \texttt{other}.

\texttt{assertion\_status}: \texttt{asserted}, \texttt{hypothetical},
\texttt{uncertain}, \texttt{quoted\_option}.

\medskip
\textbf{Annotation rules}
\begin{itemize}[leftmargin=1.4em,nosep]
    \item Precision first: when uncertain, do not assign direct VEF.
    \item Span text must be an exact substring of the trace. Do not
    paraphrase, translate, or modify LaTeX.
    \item Keep spans short but complete and independently interpretable.
    \item Full trace coverage is not required. Return at most 30 spans.
    \item Do not use the gold answer or consult existing Kimi/Qwen spans.
\end{itemize}

\medskip
\textbf{Checks before outputting any direct VEF span}
\begin{enumerate}[leftmargin=1.4em,nosep]
    \item Does it claim to see, read, count, or identify information in
    the current image?
    \item Does it merely repeat the question or answer options?
    \item Does it combine what is observed with what is inferred?
    \item Does it contain inference cues such as therefore, thus, hence,
    indicating, implying, so, or which means?
    \item If the latter part is removed, does the first part already
    constitute a complete visual observation?
\end{enumerate}
\end{pacgpromptbox}

\begin{pacgpromptbox}{Output format and sample payload}
Output JSON directly, not Markdown. Use the following JSON format:
\begin{verbatim}
{
  "sample_id": "...",
  "spans": [
    {
      "text": "exact substring",
      "span_group": "direct_vef",
      "function_type": "ocr_chart",
      "is_vef": true,
      "vef_role": "direct",
      "visual_dependency": "direct",
      "evidence_status": "supported",
      "analysis_category": "supported_direct_vef",
      "claim_polarity": "positive",
      "confidence": 0.95,
      "evidence_in_question_text": false,
      "requires_current_image": true,
      "contains_inference": false,
      "assertion_status": "asserted",
      "rationale": "brief reason"
    }
  ]
}
\end{verbatim}
Sample to annotate (placeholders are replaced at request time):
\begin{verbatim}
{
  "sample_id": "<sample identifier>",
  "question": "<question text>",
  "trace": "<reasoning trace>"
}
\end{verbatim}
\end{pacgpromptbox}

\subsection{Free-Generation Avoidance-Audit Prompt}
\label{app:avoidance_annotation_prompt}

For the MMK12 free-generation audit in Table~\ref{tab:avoidance_audit},
we use Qwen3.8-max as the span annotator, following the span-annotation
instructions and JSON schema shown in Prompts~1--2. Prompt~3 specifies
the message structure used by the annotation client. Each request supplies
the original image(s), question, and generated response, with an opaque
identifier that hides the source checkpoint. The client uses temperature
0, top-$p$ 1, and disables thinking. Correct and incorrect responses are
both annotated; answer scores are retained separately from the annotation
request.

\begin{pacgpromptbox}{Free-generation avoidance audit (Qwen3.8-max)}
\textbf{SYSTEM:}

You are a strict JSON annotation service. Return exactly one complete
JSON object and no reasoning, Markdown, or surrounding prose.

\medskip
\textbf{USER (multimodal content):}

\textit{[Original image(s), supplied as image-content items.]}

\textit{[The complete span-annotation instructions from Prompt~1,
followed by the JSON output schema from Prompt~2.]}

Sample to annotate:
\begin{verbatim}
{
  "sample_id": "<opaque sample identifier>",
  "question": "<question text>",
  "trace": "<freely generated response>"
}
\end{verbatim}

\medskip
\textbf{Retry suffix (only after an unsuccessful attempt):}

Retry \texttt{<attempt>}: return exactly one complete JSON object with
an explicit \texttt{"spans"} list. Every item in spans must be a JSON
object. Do not use Markdown.
\end{pacgpromptbox}

\subsection{Training and Evaluation Prompt Template}
\label{app:reasoning_template}

For all training and evaluation experiments, we use the single standardized
reasoning template below. Its structured format elicits a consistent
chain-of-thought response and supports automated parsing of final answers.
The question placeholder contains the task question and any answer choices;
the corresponding image(s) are supplied as multimodal input.

\begin{pacgpromptbox}{Reasoning template}
\textbf{SYSTEM:}

You are a helpful assistant.

\medskip
\textbf{USER:}

\texttt{\{question\}}

You first think through the reasoning process as an internal monologue,
enclosed within \texttt{<think>} \texttt{</think>} tags. Then, provide
your final answer enclosed within \texttt{\textbackslash boxed\{\}}.
\end{pacgpromptbox}

\subsection{CrossBench4 Base Annotation Request}
\label{app:crossbench_annotation_request}

The CrossBench4 Base annotation pipeline uses the same English
instructions and JSON schema shown in Prompts~1--2. It supplies each
question's image(s), question text, and generated trace to the annotation
model. It does not ask the annotator to grade the final answer or provide
the gold answer.

\begin{pacgpromptbox}{CrossBench4 Base span-annotation request}
\textbf{SYSTEM (verbatim):}

You are a strict JSON annotation service. Return exactly one complete
JSON object and no reasoning, Markdown, or surrounding prose.

\medskip
\textbf{USER:}

\texttt{<original image(s), supplied as image-content items>}

\texttt{<annotation instructions and JSON schema shown in Prompts 1--2>}

Sample to annotate:
\begin{verbatim}
{
  "sample_id": "<blind_annotation_id, or request_id if absent>",
  "question": "<question text>",
  "trace": "<trace truncated to max_trace_chars>"
}
\end{verbatim}

\medskip
\textbf{Retry suffix (after an unsuccessful attempt):}

Retry \texttt{<attempt>}: return exactly one complete JSON object with
an explicit \texttt{"spans"} list. Every item in spans must be a JSON
object. Do not use Markdown.
\end{pacgpromptbox}

%% file: appendices/limitations.tex
\section{Limitations and Broader Impact}
\label{app:limitations}

\paragraph{Evaluation scope.}
The mechanism analysis uses fixed Base responses on four multimodal
reasoning benchmarks. This controls the compared claims across checkpoints
but does not describe each policy's full generation distribution.
The separate nine-benchmark accuracy evaluation and free-generation audit
address task performance and generated claim frequency.

\paragraph{Annotation and intervention.}
Direct-claim routing and evidence-status annotation remain imperfect,
as quantified in Appendix~\ref{app:router_human_audit}.
New domains may require additional annotation calibration. Image corruption
measures a claim's response to the chosen intervention, rather than
certifying factual correctness or guaranteeing selective removal of that
claim's evidence. The robustness comparison covers three operators.

\paragraph{Calibration and compute.}
The margin sweep supports $m=0.05$ among the tested settings; it does not
jointly tune gate strength and floor across all backbones.
Online diagnostics are shown for one representative seed per method.
Routing introduces
service-dependent training latency. Appendix~\ref{app:coverage_efficiency}
reports measured stage-level timing.

\paragraph{Broader impact.}
Claim-level diagnostics can help identify unreliable visual assertions
inside apparently successful reasoning. PACG is a credit-assignment
mechanism, not a factuality verifier; deployment still requires
task-specific validation, especially in high-stakes domains.

\paragraph{Reproducibility.}
Training settings, fixed-reference construction, annotation prompts,
and additional results are documented in the preceding appendices.
Reproducing the mechanism comparisons requires the same frozen questions,
responses, spans, and image interventions.

\paragraph{Outlook.}
Reinforcement learning remains a surprisingly effective and still poorly
understood tool for multimodal reasoning. Outcome rewards alone can
substantially improve a model's reasoning ability, yet whether the same
process teaches the model to \emph{see} better, to ground each statement in
what the image actually shows, has been approached from many directions,
including perception rewards, visual-dependence reweighting, and process
supervision. Our findings add one more piece to this picture: a policy can
become more responsive to the image without becoming more willing to
withdraw what the image does not support. PACG is one attempt to encourage
the latter, so that visual claims are not only visually dependent but also
held accountable to visual evidence. We hope that the distinction between
sensitivity and retractability, and the phenomena we observe under different
optimizers, can inform future work on how multimodal models learn to look
before they reason.

%% file: appendices/qualitative_comparisons.tex
\section{Qualitative Comparisons: Parent Optimizers and PACG}
\label{app:qualitative_comparisons}

We compare selected responses from DAPO or VPPO with their
PACG counterparts on the same question and image. Each pair shows an
incorrect parent answer and a correct PACG answer; these selected samples
illustrate behavior, not its frequency. The excerpts retain the original
wording, with omissions marked by [\ldots]. Red highlights identify
erroneous visual assertions; blue highlights identify the relevant
contrasting observations. These are manual display annotations, not
router outputs. Counting examples concern object inventory rather than
the correctness of every attribute description.

The DAPO/PACG pairs precede the two VPPO/PACG pairs. For the cube-folding,
histogram, and experimental-setup excerpts, red and green highlights
retain the error/correction contrasts of the source qualitative cards.

\newtcolorbox{pacgcasebox}[1]{enhanced,
 colback=blue!2,colframe=pacgpromptblue,colbacktitle=pacgpromptblue,
 coltitle=white,fonttitle=\bfseries\small,
 fontupper=\fontsize{9.5}{12}\selectfont,
 title={#1},boxrule=0.5pt,arc=1.5pt,
 left=7pt,right=7pt,top=6pt,bottom=6pt}
\newcommand{\pacgcasebad}[1]{\textcolor{red!75!black}{\textbf{#1}}}
\newcommand{\pacgcasegood}[1]{\textcolor{blue!70!black}{\textbf{#1}}}

\subsection{Counting small, adjacent airplanes}
\label{app:paired_case_1}
\noindent
\begin{minipage}[c]{0.37\linewidth}
\centering
\includegraphics[width=\linewidth,height=1.65in,keepaspectratio]{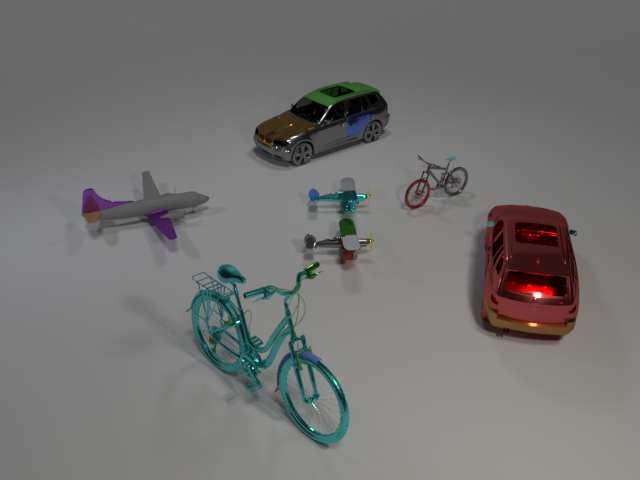}
\end{minipage}\hfill
\begin{minipage}[c]{0.60\linewidth}
\small
\textbf{Question.} How many different items are there in the image?

\medskip
\textbf{Ground truth:} 7 objects.

\end{minipage}

\medskip
\noindent
\begin{minipage}[t]{0.485\linewidth}
\vspace{0pt}
\begin{pacgcasebox}{Generated by DAPO}
\textbf{Incorrect}

\smallskip
\textcolor{black!50}{[\ldots]}

1. There is a large gray airplane on the left.\par
2. There is a small blue and white biplane in the center.\par
3. There is a large metallic bicycle in the foreground.\par
4. There is a large gray SUV in the background.\par
5. There is a large red SUV on the right.\par
6. There is a small gray bicycle near the red SUV.

\textcolor{black!50}{[\ldots]}

\pacgcasebad{There are 6 distinct items in the image.}

\(\boxed{6}\)
\end{pacgcasebox}
\end{minipage}\hfill
\begin{minipage}[t]{0.485\linewidth}
\vspace{0pt}
\begin{pacgcasebox}{Generated by DAPO+PACG}
\textbf{Correct}

\smallskip
\textcolor{black!50}{[\ldots]}

2. List the objects:\par
- There is a gray airplane on the left.\par
- There is a shiny silver car in the back.\par
- There is a small airplane with a teal and cyan body and a red tail.\par
- There is a smaller airplane with a red and silver body.\par
- There is a small bicycle with a teal frame and silver components.\par
- There is a larger bicycle with a teal frame and silver components.\par
- There is a red car on the right.

\textcolor{black!50}{[\ldots]}

3. Count the total number of unique objects:\par
- We have one gray airplane.\par
- We have two cars (one silver and one red).\par
- \pacgcasegood{We have two small airplanes.}\par
- We have two bicycles.

\textcolor{black!50}{[\ldots]}

Therefore, the total number of different items in the image is \(\boxed{7}\).
\end{pacgcasebox}
\end{minipage}

\smallskip
{\small\textbf{Reading the pair.} The image contains three airplanes, two cars, and two bicycles. DAPO lists only one of the two small central airplanes and reports six objects. The PACG inventory includes both small airplanes and reports seven.\par}

\input{appendices/qualitative_card_excerpts}

\clearpage
\subsection{Counting partially overlapping motorcycles}
\label{app:paired_case_2}
\noindent
\begin{minipage}[c]{0.37\linewidth}
\centering
\includegraphics[width=\linewidth,height=1.65in,keepaspectratio]{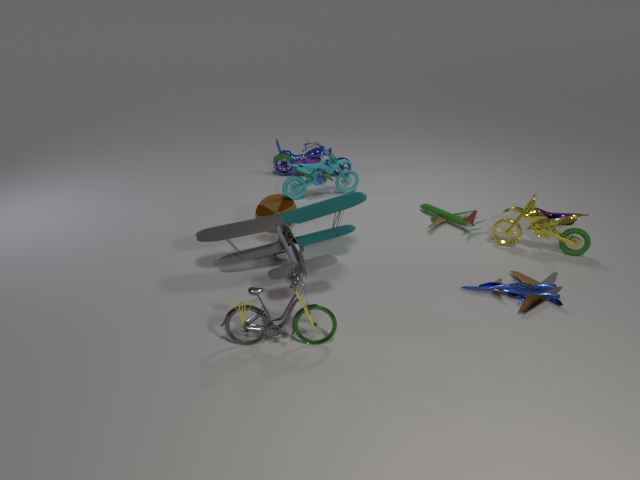}
\end{minipage}\hfill
\begin{minipage}[c]{0.60\linewidth}
\small
\textbf{Question.} How many different items are there in the image?

\medskip
\textbf{Ground truth:} 7 objects.

\end{minipage}

\medskip
\noindent
\begin{minipage}[t]{0.485\linewidth}
\vspace{0pt}
\begin{pacgcasebox}{Generated by VPPO}
\textbf{Incorrect}

\smallskip
\textcolor{black!50}{[\ldots]}

1. There is a bicycle in the foreground.\par
2. There is a biplane in the background.\par
3. There is a small airplane to the right of the biplane.\par
4. There is a blue motorcycle in the background.\par
5. There is a blue jet to the right of the small airplane.\par
6. There is a yellow motorcycle to the right of the small airplane.

\textcolor{black!50}{[\ldots]}

\pacgcasebad{Counting these, we have 6 different items.}

\(\boxed{6}\)
\end{pacgcasebox}
\end{minipage}\hfill
\begin{minipage}[t]{0.485\linewidth}
\vspace{0pt}
\begin{pacgcasebox}{Generated by VPPO+PACG}
\textbf{Correct}

\smallskip
\textcolor{black!50}{[\ldots]}

1. There is a biplane in the foreground.\par
2. There is a bicycle in the foreground.\par
3. There is a small green airplane in the background.\par
4. There is a small blue and purple motorcycle in the background.\par
5. \pacgcasegood{There is a small cyan and purple motorcycle in the background.}\par
6. There is a small yellow and purple motorcycle in the background.\par
7. There is a small blue and purple airplane in the background.

\textcolor{black!50}{[\ldots]}

There are 7 different items in the image.

\(\boxed{7}\)
\end{pacgcasebox}
\end{minipage}

\smallskip
{\small\textbf{Reading the pair.} The image contains three motorcycles, three airplanes (including the biplane), and one bicycle. VPPO lists only two motorcycles. The PACG response also identifies the cyan motorcycle and reports seven objects.\par}

\clearpage
\subsection{Reading the angle associated with the target arc}
\label{app:paired_case_3}
\noindent
\begin{minipage}[c]{0.37\linewidth}
\centering
\includegraphics[width=\linewidth,height=1.65in,keepaspectratio]{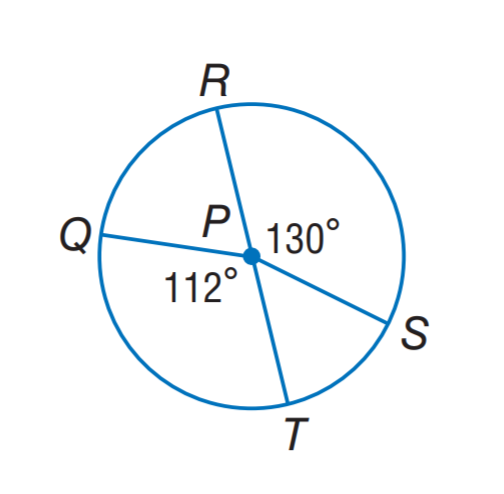}
\end{minipage}\hfill
\begin{minipage}[c]{0.60\linewidth}
\small
\textbf{Question.} In $\odot P$, the radius is $2$ inches, find the length of $\widehat {RS}$. Round to the nearest hundredth.

\medskip
\textbf{Ground truth:} \(4.54\) inches.

\end{minipage}

\medskip
\noindent
\begin{minipage}[t]{0.485\linewidth}
\vspace{0pt}
\begin{pacgcasebox}{Generated by VPPO}
\textbf{Incorrect}

\smallskip
\textcolor{black!50}{[\ldots]}

\pacgcasebad{Notice that the problem provides two central angles: \(\angle RPM = 130^\circ\) and \(\angle QPM = 112^\circ\).}

\textcolor{black!50}{[\ldots]}

The central angle subtending the arc \(\widehat{RS}\) is \(130^\circ\), and the central angle subtending the arc \(\widehat{QT}\) is \(112^\circ\).

\textcolor{black!50}{[\ldots]}

Now we use the formula for the arc length. The radius of the circle is given as 2 inches, and \pacgcasebad{the central angle corresponding to the arc \(\widehat{RS}\) is \(118^\circ\).}

\textcolor{black!50}{[\ldots]}

The answer is \(\boxed{4.12}\).
\end{pacgcasebox}
\end{minipage}\hfill
\begin{minipage}[t]{0.485\linewidth}
\vspace{0pt}
\begin{pacgcasebox}{Generated by VPPO+PACG}
\textbf{Correct}

\smallskip
\textcolor{black!50}{[\ldots]}

- The radius \(r\) of the circle is 2 inches.\par
- \pacgcasegood{The central angle \(\theta\) for the arc \(\widehat{RS}\) is \(130^\circ\).}

\textcolor{black!50}{[\ldots]}

Here, \(\theta = 130^\circ\) and \(r = 2\) inches. Plugging in these values, we get:
\[
L_{RS} = \frac{130^\circ}{360^\circ} \times 2 \pi \times 2
\]

\textcolor{black!50}{[\ldots]}

Thus, the length of the arc \(\widehat{RS}\) is \(\boxed{4.54}\) inches.
\end{pacgcasebox}
\end{minipage}

\smallskip
{\small\textbf{Reading the pair.} The diagram labels \(\angle RPS=130^\circ\); no point \(M\) is shown. VPPO invents angle labels involving \(M\), then uses \(118^\circ\) for the target arc despite initially recognizing \(130^\circ\). The selected PACG response uses \(130^\circ\) and obtains \(4.54\) inches.\par}

\FloatBarrier

%% file: appendices/qualitative_card_excerpts.tex
% Transcribed from right-hand panels of figure/qualitative_case_cards_3cases.pdf.
% These supplied excerpts have not been independently matched to raw rollouts.
\newcommand{\pacgcardgood}[1]{\textcolor{green!45!black}{\textbf{#1}}}
\newcommand{\pacgthinkopen}{\texttt{\detokenize{<think>}}}
\newcommand{\pacgthinkclose}{\texttt{\detokenize{</think>}}}

\clearpage
\subsection{Cube-folding reasoning}
\label{app:qualitative_cube_folding}
\noindent
\begin{minipage}[c]{0.40\linewidth}
\centering
\includegraphics[width=\linewidth,height=2.6in,keepaspectratio]{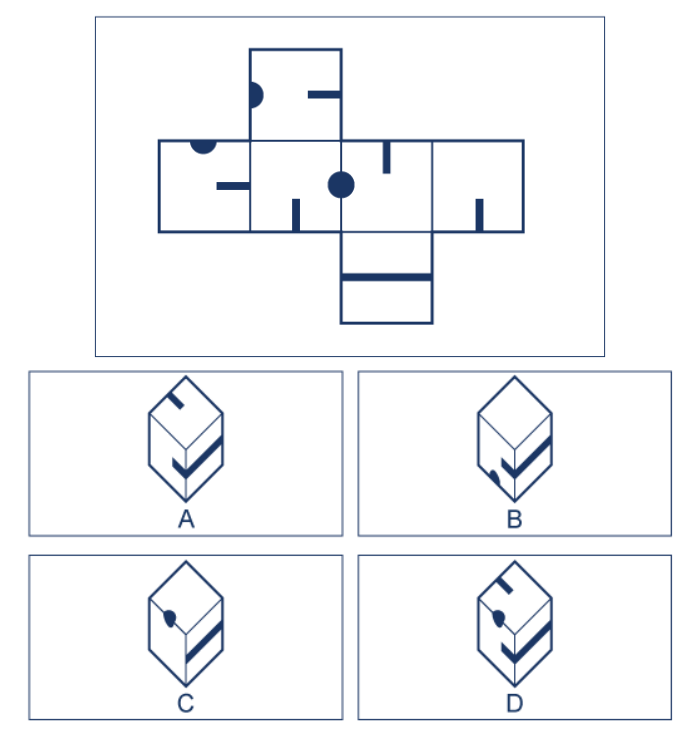}
\end{minipage}\hfill
\begin{minipage}[c]{0.57\linewidth}
\small
\textbf{Question.} Which cube can be formed by folding the given net?

\medskip
\textbf{Choices:} A, B, C, D.\quad \textbf{Ground truth:} D.

\end{minipage}

\medskip
\noindent
\begin{minipage}[t]{0.485\linewidth}
\vspace{0pt}
\begin{pacgcasebox}{Generated by DAPO}
\textbf{Incorrect}

\smallskip
\pacgthinkopen

The three faces represented in option A can be traced to neighboring faces in the net. None appears opposite another, and their line segments seem able to meet round one corner.

This \pacgcasebad{local adjacency check} makes the arrangement appear compatible with the net, so I select option A.

\pacgthinkclose

\smallskip
The predicted answer is \pacgcasebad{\(\boxed{\mathrm{A}}\)}.
\end{pacgcasebox}
\end{minipage}\hfill
\begin{minipage}[t]{0.485\linewidth}
\vspace{0pt}
\begin{pacgcasebox}{Generated by DAPO+PACG}
\textbf{Correct}

\smallskip
\pacgthinkopen

Keep the dotted central face on the left. Its right neighbor folds upward, while the face attached below folds to the right.

The dot remains on the \pacgcardgood{upper-left edge}, and the two line segments meet at the \pacgcardgood{front vertical edge}. This preserves the cyclic face order and rejects the mirrored fold.

\pacgthinkclose

\smallskip
The orientation-consistent answer is \pacgcardgood{\(\boxed{\mathrm{D}}\)}.
\end{pacgcasebox}
\end{minipage}

\clearpage
\subsection{Histogram reasoning}
\label{app:qualitative_histogram}
\noindent
\begin{minipage}[c]{0.40\linewidth}
\centering
\includegraphics[width=\linewidth,height=2.6in,keepaspectratio]{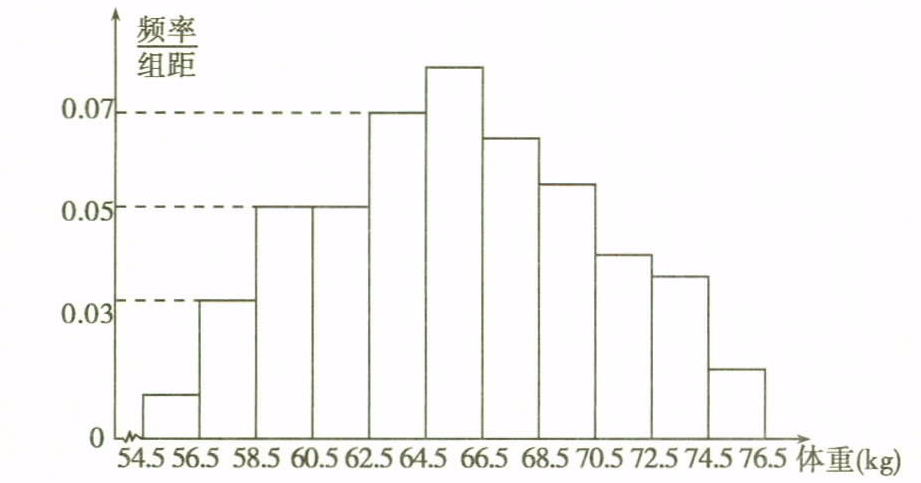}
\end{minipage}\hfill
\begin{minipage}[c]{0.57\linewidth}
\small
\textbf{Question.} Estimate how many of 1,500 boys weigh 56.5--64.5 kg.

\medskip
\textbf{Ground truth:} 600 boys.

\end{minipage}

\medskip
\noindent
\begin{minipage}[t]{0.485\linewidth}
\vspace{0pt}
\begin{pacgcasebox}{Generated by DAPO}
\textbf{Incorrect}

\smallskip
\pacgthinkopen

The interval contains four bars with heights 0.03, 0.05, 0.05, and 0.07. Adding these values gives \pacgcasebad{0.20}, which is treated as the fraction of boys in the interval.
\[
1{,}500 \times 0.20 = 300
\]
Using this height-sum interpretation, I would report 300 boys.

\pacgthinkclose

\smallskip
The predicted answer is \pacgcasebad{\(\boxed{300\text{ boys}}\)}.
\end{pacgcasebox}
\end{minipage}\hfill
\begin{minipage}[t]{0.485\linewidth}
\vspace{0pt}
\begin{pacgcasebox}{Generated by DAPO+PACG}
\textbf{Correct}

\smallskip
\pacgthinkopen

The vertical axis is \pacgcardgood{frequency per bin width}, so each bar height is a density rather than a frequency. The selected bins are each \pacgcardgood{2 kg wide}.
\[
2 \times (0.03 + 0.05 + 0.05 + 0.07) = 0.40
\]
The total bar area is therefore 0.40, and \(1{,}500 \times 0.40 = 600\).

\pacgthinkclose

\smallskip
The evidence-grounded answer is \pacgcardgood{\(\boxed{600\text{ boys}}\)}.
\end{pacgcasebox}
\end{minipage}

\clearpage
\subsection{Experimental-setup reasoning}
\label{app:qualitative_experimental_setup}
\begin{center}
\includegraphics[width=0.92\linewidth]{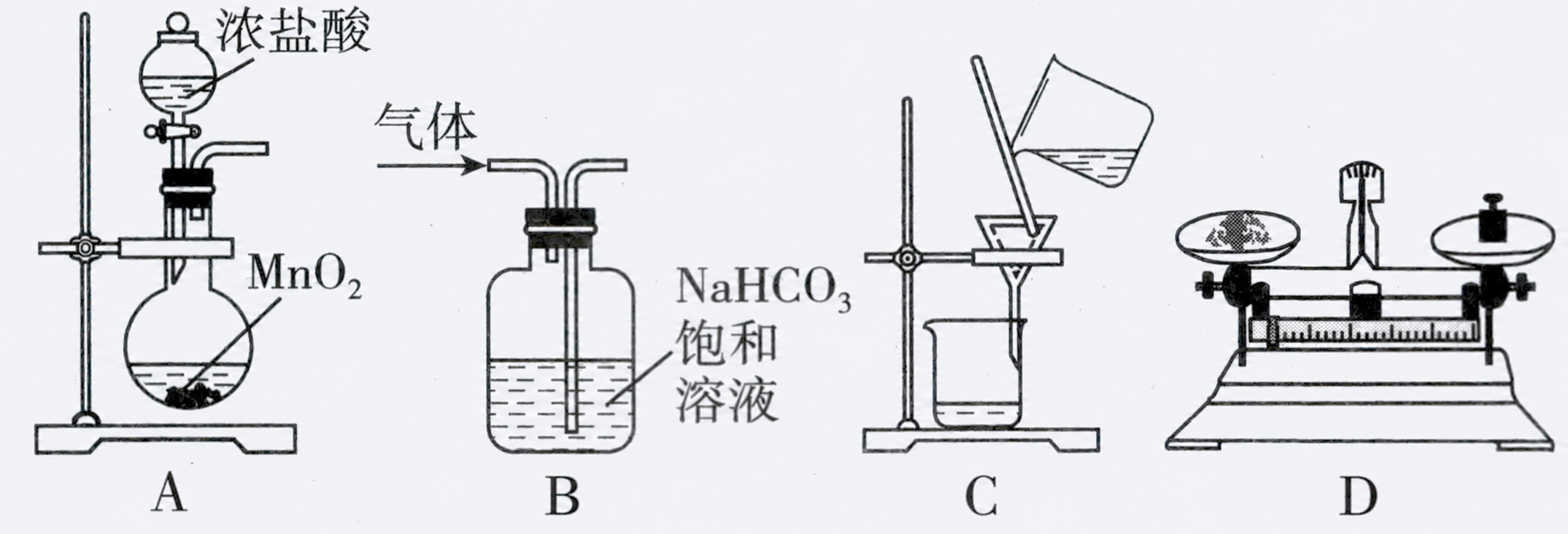}
\end{center}
{\small
\textbf{Question.} Which apparatus or experimental operation is correctly configured?

\medskip
\textbf{Choices:} A, B, C, D.\quad \textbf{Ground truth:} B.

\par}

\medskip
\noindent
\begin{minipage}[t]{0.485\linewidth}
\vspace{0pt}
\begin{pacgcasebox}{Generated by DAPO}
\textbf{Incorrect}

\smallskip
\pacgthinkopen

In setup A, manganese dioxide reacts with concentrated hydrochloric acid to produce chlorine. The reactants and gas-delivery tube are present, so the apparatus appears suitable for chlorine preparation.

This \pacgcasebad{reactant-level match} overlooks whether all required operating conditions are shown. I therefore select option A.

\pacgthinkclose

\smallskip
The predicted answer is \pacgcasebad{\(\boxed{\mathrm{A}}\)}.
\end{pacgcasebox}
\end{minipage}\hfill
\begin{minipage}[t]{0.485\linewidth}
\vspace{0pt}
\begin{pacgcasebox}{Generated by DAPO+PACG}
\textbf{Correct}

\smallskip
\pacgthinkopen

Each setup must be checked against its operating constraints. Setup A is incomplete because the \(\mathrm{MnO_2}\)--concentrated HCl reaction requires \pacgcardgood{heating}, which is absent. Setup B correctly passes the gas through \pacgcardgood{saturated \(\mathrm{NaHCO_3}\) solution} to remove an acidic impurity while retaining the target gas.

The remaining drawings contain procedural inconsistencies, so B is the only configuration that satisfies the full apparatus-level check.

\pacgthinkclose

\smallskip
The constraint-consistent answer is \pacgcardgood{\(\boxed{\mathrm{B}}\)}.
\end{pacgcasebox}
\end{minipage}
\FloatBarrier